\documentclass[twoside]{article}

%
\usepackage[accepted]{aistats2025}
%


\usepackage[round]{natbib}


\usepackage{enumitem}

\usepackage[utf8]{inputenc} 
\usepackage[T1]{fontenc}    
\usepackage{lmodern}
\usepackage{hyperref}       
\usepackage{url}            
\usepackage{booktabs}       
\usepackage{amsfonts}       
\usepackage{nicefrac}       
\usepackage{microtype}      
\usepackage{xcolor}         
\usepackage{wrapfig}

\hypersetup{
    colorlinks,
    linkcolor={red!50!black},
    citecolor={blue!60!black},
    urlcolor={blue!80!black}
}

\usepackage{nicefrac}
\usepackage{tikz}
\usetikzlibrary{arrows,shapes}
\usepackage[rectangularnodes]{influence-diagrams}

\usepackage[capitalize,noabbrev]{cleveref}

\usepackage{YK}

\RequirePackage{algorithm}
\RequirePackage{algorithmic}

\usepackage{aistats2025}
%
%




\begin{document}

%

%
\runningauthor{Bracale, Maity, Sun, Banerjee}

\twocolumn[

\aistatstitle{Learning the Distribution Map in Reverse Causal Performative Prediction}

\aistatsauthor{ Daniele Bracale$^*$ \And Subha Maity$^*$ \And  Yuekai Sun \And Moulinath Banerjee }

\aistatsaddress{ dbracale@umich.edu\\
Department of Statistics\\
University of Michigan \And smaity@uwaterloo.ca\\Department of Statistics \\ \& Actuarial Science \\University of Waterloo 
\And yuekai@umich.edu\\Department of Statistics\\University of Michigan \And moulib@umich.edu\\Department of Statistics\\University of Michigan } ]

\renewcommand{\thefootnote}{} 
\footnotetext{* Equal contribution.}
\renewcommand{\thefootnote}{\arabic{footnote}} 

\begin{abstract}

 In numerous predictive scenarios, the predictive model affects the sampling distribution; for example, job applicants often meticulously craft their resumes to navigate through a screening system. Such shifts in distribution are particularly prevalent in social computing, yet, the strategies to learn these shifts from data remain remarkably limited. Inspired by a microeconomic model that adeptly characterizes agents' behavior within labor markets, we introduce a novel approach to learning the distribution shift. Our method is predicated on a \emph{reverse causal model}, wherein the predictive model instigates a distribution shift exclusively through a finite set of agents' actions. Within this framework, we employ a microfoundation model for the agents' actions and develop a statistically justified methodology to learn the distribution shift map, which we demonstrate to effectively minimize the performative prediction risk. 
\end{abstract}

\section{Introduction}

Predictive models frequently guide decisions in strategic settings, but their environmental impact is often ignored. For example, a lender assessing loan default risk to set interest rates might inadvertently increase default risk for riskier applicants by charging higher rates, creating a self-fulfilling prophecy. In addition, borrowers might manipulate their profiles to appear less risky and secure lower rates. These influences are widespread: crime predictions shape police resource allocation, affecting crime rates, while recommendations alter user preferences and consumption. Goodhart's law \citep{strathern1997improving}, summarized as ``when a measure becomes a target, it ceases to be a good measure," also hints how predictive models affect their environments through performative \emph{agents}.


From a probabilistic perspective, performativity manifests itself as a shift in distribution; agents adapt to prediction models by altering their attributes over time. Returning to the default risk assessment example, the self-fulfilling prophecy leads to label shift \citep{lipton2018Detecting}, while borrowers manipulating their profiles cause a class conditional shift \citep{maity2022Understanding}.  To mitigate these changes, while practitioners often retrain models \citep{cormier2016Launch}, a more proactive measure is to incorporate environmental impacts during model training, known as \textit{strategic/performative prediction} \citep{hardt2015Strategic,perdomo2020Performative}, which adjusts risk minimization to account for distribution shifts. This method, while effective, requires understanding these shifts -- a topic not well-covered in the literature. \emph{Our paper proposes a framework to estimate these distribution shifts}, which makes training of prediction tools for performative environments more accessible to practitioners.

\begin{figure}[h]
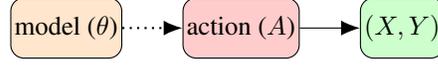

    \centering
    \vspace{-0.1in}
    \begin{influence-diagram}
      \node (X) [fill=orange!20!white] {model ($\theta$)};
      \node (A) [right =0.8cm of X,fill=red!20!white] {action ($A$)};
      \node (Z) [right =0.8cm of A, fill=green!20!white] {$(X, Y)$};
      \edge[information] {X} {A};
      \edge {A} {Z};
    \end{influence-diagram}
    \vspace{-0.1in}
    \caption{Reverse causal performative prediction}
    \label{fig:reverse_causal}
\end{figure}

In this paper, we reformulate the setup of {\it reverse causal performative prediction} as introduced in \cite{somerstep2024learning}, which can be roughly stated as ``the agents doctor their attributes by taking appropriate actions,  and
the distributions of modified attributes depend on the predictive model only through these actions''. To formalize the assumption, we denote the \emph{ex-post} distribution of attribute $(X, Y, A) \in \cX \times \cY \times \cA$ after deploying a model $\theta \in \Theta$ as $\cD(\theta)$, where $X$ denotes the covariates, $Y$ is the response and $A$ is the action. The parameter space $\Theta$ is of appropriate dimension, depending on the model used by the learner (\eg\  $\dim(\Theta) = \dim(\cX)+1$ for linear models and $\dim(\Theta) \gg \dim(\cX)$ for neural networks). Our reformulation states that $\cD(\theta)$ depends only through the distribution of agents' action, \ie\ having specified an action $a \in \cA$  the $\cD(X , Y \mid A =a ;\theta)$ is invariant of $\theta$ and therefore, denoting the invariant distribution as $\cD(X, Y \mid A)$ the $\cD(\theta)$ is decomposed as 
\begin{equation}\label{assumption:main}
    \textstyle\cD(\theta) =  \cD_A(\theta) \times \cD(X, Y \mid A) \,.
\end{equation} In the second paragraph of our conclusion \cref{sec:conclusion} we discuss this reverse causal model into the broader context of performative prediction, but for now we move on toward  our goal of learning $\cD(\theta)$ within this reverse causal model. To learn $\cD(\theta)$, in our paper, we assume that the learner has access to adequate $\iid$  samples from each of the $\cD(X, Y \mid A)$, or better yet, we assume that $\cD(X, Y \mid A)$ is known. Several instances where $\cD(\cdot \mid A)$ is known can be found in \citet[§§ 2.1, 3.1, 3.2, 4.1, 4.3, 6.6.1, 6.6.2]{fang2011theories} where they analyze models in which $Z$ represents a score and $A$ denotes group membership (e.g., \emph{qualified} vs. \emph{unqualified}) \citep[§§ 3.2, 4.1, 4.3]{fang2011theories}. Then, the only part that remains to be learned is $\cD_A(\theta)$, \ie\ the distribution map for the actions of the agents. A  proposal for learning this $\cD_A(\theta)$ follows.  

To estimate $\cD_A(\theta)$, we consider a microfoundation framework whose motivation comes from the Caote-Loury labor market model \citep{coate1993Will} (see \cref{ex:coate-loury}). The best course of action of an agent is formalized by the \emph{cost-adjusted utility maximization problem}: 
\begin{equation} \label{eq:microfoundation-random}
  \textstyle  A \triangleq \argmax_{a \in \cA} \sB_a(\theta) - C_a\,,
\end{equation} where  $\sB_a(\theta)$ is the benefit of taking an action $a$ under the model $\theta \in \Theta$, and  $C_a$ is the cost of the same $a$.  Typically, the $\sB_a(\theta)$ is known to the learner. Furthermore, we frame $C_a$ as a random quantity, which encodes an agent's own preference for the action $a$. For example, while preparing for a test, a candidate from a wealthier background would be more willing to enroll in a test-prep course. Given that $\sB_a(\theta)$ is known, in this microfoundation setting, we learn $\cD_A(\theta)$ through learning the distribution of $\{C_a; a \in \cA\}$.  To keep things simple, in this paper we assume that \emph{agent's action space $\cA$ is finite}. While it may be possible to learn this distribution for an arbitrary $\cA$, this will involve considerably more technical challenges and beyond the scope of this paper; further related discussion is deferred to the concluding \cref{sec:conclusion}.

At this point, it may seem that we have restricted our reverse causal setting to merely a microfoundation setting described in \eqref{eq:microfoundation-random}, where the agents are necessarily strategic. But we argue in \cref{sec:microfoundation} (\cf\ \cref{lemma:cost-existance}) that by incorporating a random cost into our microfoundation we make it completely general in the reverse causal setting, as, for a suitable choice of benefit $\bB(\theta) \triangleq \{\sB_a(\theta): a\in \cA\}$ and random cost $\bC \triangleq \{C_a: a \in \cA\}$, any $\cD_A(\theta)$ can be expressed by our microfoundation setup. This is not possible in a usual strategic prediction setting \citep{hardt2015Strategic}, where the deterministic cost can only allow a strategic agent. In contrast, our setting also allows for \emph{non-strategic agents}. By \emph{non-strategic} we mean that two agents with the same \emph{ex-ante} $z$ do not necessarily have the same \emph{ex-post} $z'$ if the cost is random. We defer further related discussion to \cref{sec:microfoundation}. 

We end the introductory section with a description of our contribution, followed by a brief survey of the related literature. 

\begin{enumerate}[left=1em]
\item In \cref{sec:reverse_causal} we introduce the microfoundations in reverse causal models for learning the distribution map, and we show that a microfoundation framework can recover any distribution shift in a reverse causal model by randomizing the cost.

\item In \cref{sec:learning-cost} we provide a method to estimate the distribution map non-parametrically, for a general number of actions. 
\item In \cref{sec:optimal_design} we provide a sequential \emph{optimal design} algorithm based on the doubling trick (\ie, every episode is twice the previous one) with $2$ actions. We prove a convergence result of the \textit{relative efficiency} of the estimated design density.
\item In \cref{sec:Regret} we propose an algorithm based on the doubling trick to estimate the distribution map $\cD(\theta)$ with a regret bound of $\widetilde{\mathcal{O}}(M^{\frac{1}{1+\eta}})$, where $\eta$ is the uniform convergence rate of the distribution map in the exploration phase and $\widetilde{\mathcal{O}}$ excludes factors $\log(M)$.
\item We consider the Coate-Loury labor market model \cite{coate1993Will} as a baseline, for which we show convergence results of the relative efficiency in both theoretical and numerical experiments.
\end{enumerate}

\subsection{Related work} 
 The performative prediction, initially introduced by \citet{perdomo2020Performative}, encompasses various aspects, with a notable instance being strategic classification \cite{hardt2016Strategic,shavit2020Causal,levanon2022Generalized}. In the context of strategic prediction, agents are presumed to be utility-maximizing, leading to an implicit determination of the distribution map by the agents' utilities. A growing body of literature focuses on developing algorithms for computing performative optimal policies \cite{izzo2021How,levanon2021Strategic,miller2021Outside}. In a related line of research, performativity is addressed in online settings with the aim of devising algorithms that effectively minimize Stackelberg regret \cite{dong2018Strategic,chen2020Learning,jagadeesan2022Regret}.



A key challenge that impedes the broader adoption of performative prediction lies in practitioners often not knowing the distribution map. In general, there are two ways to bypass this problem. The first treats the performative prediction problem \eqref{eq:performative-prediction} as a zeroth-order (\ie\ derivative free) optimization problem, relying on samples from $\cD(\theta_t)$ ($\theta_t$ being optimization algorithm iterates) to solve it without explicit knowledge of the distribution map $\cD(\theta)$ \cite{izzo2021How,miller2021Outside}. However, as the authors \cite{miller2021Outside} state, this method often leads to slow convergence and curse of dimensionality.
The second approach deals with unknown agent responses by imposing microfoundation models \cite{jang2022Sequential}. This approach benefits from fast convergence rates, leveraging white-box access to the distribution map \cite{hardt2016Strategic,levanon2021Strategic,cutler2021Stochastic}, but the distribution map is often unknown in practice, and a misspecification in this map can hinder the finding of the correct solutions \cite{lin2023plug}. Our proposed approach complements the second approach: by learning in a non-parametric way the microfoundations model from agent response data, we address issues related to misspecified distribution map, while benefiting from faster optimization algorithms by utilizing the white-box access to the distribution map. To the best of our knowledge, current results for estimating the distribution map assume a parametric model on $\cD(\theta)$, for example, \cite{miller2021Outside} assumes a location-scale distribution map generated by the linear transformation $z_{\theta} = z_0 + \mu \theta$. 

\section{Reverse causal performative prediction}
\label{sec:reverse_causal}

In a standard strategic prediction setting, agents update their covariates $X$, but do not update their responses $Y$. \citet{horowitz2023causal} considered a more general setting in which agents modify both $X$ and $Y$ using a causal/structural model. Here, agents still change their covariates, but such changes can propagate through the causal model and (indirectly) lead to changes in their responses. In their reverse causal model, \citet{somerstep2024learning} assumes that the agents update their $Y \to Y'$, which leads to a change in $X \sim \Phi(\cdot \mid Y')$.  We further generalize this by assuming that the agents can modify both $Z= (X, Y)$ as a response to $\theta$, but only through their action $A$, leading to the structural model in \cref{fig:reverse_causal}. Thus, the \emph{ex-post} distribution $\cD(\theta)$ depends on $\theta$ only through the marginal distribution of $A$, \ie\ through $\cD_A(\theta)$. Since $\cD(\theta)$ can be decomposed as $\cD(\theta) = \cD_A(\theta) \times \cD(Z \mid A;\theta)$, the reverse causal setting is equivalently stated in the following assumption. 
\begin{assumption}[Reverse causal model]\label{assumption:condition_X_given_A}
The $\cD(Z\mid A;\theta)$ are not affected by the prediction model $\theta$. Moving forward, we drop this dependence on $\theta$  and simply denote this by $\cD(Z \mid A)$. 
\end{assumption} 
Our reverse causal model simplifies to the one in \citet[Section 2]{somerstep2024learning} if we let $A = Y$.

A prominent example of reverse causal model is the Coate-Loury (CL) labor market model \citep{coate1993Will}. We describe a variation of this model below, which will serve as a running example throughout our paper.
\begin{example}[Coate-Loury labor market model \citep{coate1993Will}]
\label{ex:coate-loury}
 Employers seek to hire skilled workers, but while making hiring decisions, are uncertain whether the workers are skilled ($Y =1$) or unskilled ($Y =0$). Instead, they observe a score $X \in \cX=[0,1]$, which is an (imperfect) skill assessment of workers and hire those whose scores exceed a threshold $\theta \in [0,1]$, \ie\ hires if $X > \theta$. Workers respond strategically to the hiring policy by making skill investments:
\begin{equation}
A\triangleq\argmax_{a\in\{0,1\}}\omega \bar{F}(\theta\mid a) - C\cdot a,
\label{eq:CL-worker-strategy}
\end{equation}
where $\omega  > 0$ is the wage paid to hired workers and $\bar{F}(\theta\mid a)=\bbP(X>\theta|Y=a)$ is the survival function of $X$, interpreted as the probability of hiring a (skilled and unskilled) worker,  and finally, $C > 0$ is a random cost of worker-specific investment for acquiring skill $Y = 1$. Here, the action $A$ is identical to the response $Y$: whether to be skilled ($A = Y = 1$) by making an investment or not ($A = Y = 0$), and thus we absorb $Y$ within $A$ while describing the distribution shift. As the distribution of $X \mid A$ remains unaffected by the threshold $\theta$, the only component of the distribution of $(X,A)$ that changes is the marginal distribution of $A$, leading to the distribution shift map
\[ \textstyle
\cD(X, A;\theta) = \cD_A(\theta)\times \cD(X \mid A), ~~ \cD_A(\theta) \stackrel{\textrm{d}}{=} \operatorname{Ber}(\pi(\theta))
\]
where $\stackrel{\textrm{d}}{=}$ denotes `equally distributed' and $\pi(\theta)$ is the proportion of skilled workers, determined  by the workers whose (expected) wage incentive $I(\theta)\in [0,\omega]$ as below: 
\begin{align}
\pi(\theta)\triangleq \Pr_{\theta}\big \{A=1\big \} = \Pr\big \{C \leq I(\theta)\big \},\\
I(\theta)\triangleq \omega\big\{\bar{F}(\theta\mid 1) - \bar{F}(\theta\mid 0)\big\}\,.
\label{eq:Coate-Loury-dist-map-param}
\end{align}

\end{example}

\subsection{Learning prediction model in reverse causal model}
As a proactive approach to learning a model in a performative prediction setting, \citet{hardt2015Strategic,perdomo2020Performative} propose to minimize \emph{performative risk} by modifying the standard risk minimization problem to account for the shift in the underlying distribution:
\begin{equation}\label{eq:performative-prediction}
\textstyle \min_{\theta \in \Theta}\PR(\theta),\quad \textstyle \PR(\theta)\triangleq \Ex_{ \cD(\theta)} [\ell(\theta; Z)]\,,
\end{equation} where $\ell$ is an appropriate loss function. For our reverse causal model (\cref{assumption:condition_X_given_A}), the performative risk simplifies to 
\begin{equation}
\begin{aligned}
    \textstyle
\PR(\theta)
= \textstyle \sum\nolimits_{a \in \cA} \cD_{A=a}(\theta)\Ex\big[\ell(\theta;Z)\mid A = a\big]\,,
\end{aligned}
\label{eq:PR-discrete-actions}
\end{equation} where $\Ex[\ell(\theta;Z)\mid A]$ is efficiently estimated given adequate samples from the distribution of $Z \mid A = a$. To keep things simple, we assume that $\cD(Z \mid A = a)$ is known for the moment.

\begin{example}[Coate-Loury model] Returning to \cref{ex:coate-loury},  assume that the firm with hiring policy $\widehat Y_\theta(X) = \indicator\{X > \theta\}$ incurs a loss of $\delta_0 > 0$ (resp. gain of $\delta_1 > 0$) for hiring an unskilled (resp. skilled) worker. Expressing the gain as a negative loss, we can express the loss function as $\ell(\theta; Z) \triangleq \widehat Y_\theta(X) \{ - \delta_1 Y + \delta_0(1 - Y)\}$. In this case the performative risk $\textstyle\PR(\theta)$ is: 
\begin{align}
&\textstyle\sum_{a \in \{0,1\}} \cD_{A=a}(\theta)\Ex\big[\ell(\theta;Z)\mid A= a\big] \\
& = - \pi(\theta) \delta_1 \bar F(\theta \mid 1) + (1 - \pi(\theta)) \delta_0 \bar F(\theta \mid 0).\nonumber
\end{align}
    
\end{example}

\subsection{Microfoundation framework for learning the distribution shift} \label{sec:microfoundation}
In order to minimize the performative risk \eqref{eq:PR-discrete-actions}, we require $\cD(\theta)$. Since we assume that we know $\Ex[\ell(\theta; Z)\mid A]$, it remains to learn $\cD_A(\theta)$. For this purpose, we propose a microfoundation framework: an agent pays a random cost $C_a$ for their action $a$ and strategically chooses their best course of action by maximizing their cost-adjusted utility, as described in \eqref{eq:microfoundation-random}
\[
\textstyle  A \triangleq \argmax_{a \in \cA} \sB_a (\theta) - C_a
\] where, for an action $a$, $\sB_a(\theta)$ is the benefit realized under the model $\theta$, and $C_a$ is the random cost that encodes the agent's own preference for the action. We eliminate the edge cases where the above $\argmax$ has multiple solutions and an agent is in conflict between multiple actions. Such conflictions can be easily eliminated by assuming that the distributions of $C_a$'s are continuous, for which the $\argmax$ is unique with probability one. 

\begin{example}[Coate-Loury model] In our \cref{ex:coate-loury}, one may set $\sB_a( \theta) = \omega \bar F(\theta \mid A = a)$, $C_1 = C$ and $C_0 \equiv 0$. 
    
\end{example}

Under this microfoundation framework \eqref{eq:microfoundation-random} the $\cD_A(\theta)$ is expressed as: 
\begin{equation} \label{eq:D_f_a}
    \cD_{A=a}(\theta) = \Pr\big\{\sB_a( \theta)-C_a \ge \sB_{a'}( \theta)-C_{a'}, \forall a'\neq a\big\}\,.
\end{equation} Typically, the $\sB_a( \theta)$ is known to the learner, \eg\ in Coate Loury \cref{ex:coate-loury}, the firm typically has access to $\cD(X \mid A)$ from a historical dataset, and can easily calculate $\sB_a( \theta) = \bbP(X > \theta \mid A = a)$. It only remains to learn the distribution of $\bC \triangleq \{C_a; a\in \cA\}$ and, moving ahead, in \cref{sec:learning-cost} we propose an algorithm for learning this. But, before moving ahead, we make the following remark about the generality of the microfoundation framework in the broader context of learning $\cD(\theta)$ in a reverse causal model.

\paragraph{Generality of microfoundation framework:} How restrictive is our framework \eqref{eq:microfoundation-random}, given that agents are strategic? As we show here, by incorporating a random cost in our microfoundation, we made it completely general in the reverse causal model that may include \emph{non-strategic} agents as well. By \emph{non-strategic} we mean that two agents with the same \emph{ex-ante} $z$ do not necessarily have the same \emph{ex-post} $z'$ if the cost is random. Moreover, in the following lemma, we show that, for a suitably chosen benefit, any $\cD_A(\theta)$ can be expressed using the microfoundation framework. 
\begin{lemma} \label{lemma:cost-existance}

For \emph{any} $\cD_A(\theta)$  and a benefit function $\bB(\theta) \triangleq \{\sB_a( \theta); a\in \cA\}$ satisfying the property that
for every $a, a'\in \cA$ with $a \neq a'$,  $\cD_{A=a}(\theta)$ can be written as a fixed  increasing function of $\sB_a( \theta) - \sB_{a'}( \theta)$, there exists a random cost vector $\bC$, that induces $\cD_A(\theta)$ through the  microfoundation equation \eqref{eq:microfoundation-random}. Furthermore, if $\{\cD_A(\theta) : \theta\in \Theta\} = \Delta^{\cA}$ then for a given $\bB(\theta)$ the distribution of such a $\bC$ that satisfies $C_0 = 0$ is unique.

\end{lemma}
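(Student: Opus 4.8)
The plan is to prove existence constructively and uniqueness by a separate argument. For the \textbf{existence} part, the strategy is to reduce the high-dimensional problem to a collection of pairwise comparisons using the hypothesis. The key observation is that the microfoundation equation \eqref{eq:microfoundation-random} defines $A = a$ precisely when the ``net benefit'' $\sB_a(\theta) - C_a$ dominates all others. I would look for a random cost vector of a particularly simple form, namely one where the randomness decouples nicely across actions. A natural first attempt is to set $C_a = \sB_a(\theta_0) - G_a$ for some reference point and independent random variables $G_a$, but since $\bC$ cannot depend on $\theta$, I would instead posit $C_a = \sB_a(\theta) + \text{(something $\theta$-free)}$ — which is impossible directly. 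So the cleaner route is to exploit the hypothesis: since each $\cD_{A=a}(\theta)$ is a fixed increasing function $g_{a,a'}$ of the benefit gap $\sB_a(\theta) - \sB_{a'}(\theta)$, I would try to realize the costs via a Gumbel-type or quantile construction so that the pairwise win-probabilities $\Pr\{\sB_a(\theta) - C_a \ge \sB_{a'}(\theta) - C_{a'}\}$ match $g_{a,a'}$ for all $\theta$ simultaneously.

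Concretely, I would define $D_{a'} \triangleq C_{a'} - C_0$ (setting $C_0 = 0$ as the normalization), so that the event $\{A = a\}$ becomes $\{\sB_a(\theta) - D_a \ge \sB_{a'}(\theta) - D_{a'}\ \forall a'\}$, depending on $\bC$ only through the differences. The goal is then to choose the joint law of $\{D_a\}$ so that the resulting selection probabilities equal the prescribed $\cD_{A=a}(\theta)$. I would first handle the two-action case $\cA = \{0,1\}$ as a warm-up: here $\cD_{A=1}(\theta) = \Pr\{D_1 \le \sB_1(\theta) - \sB_0(\theta)\}$, so $D_1$ is forced to have CDF equal to the given increasing function $g_{1,0}(\cdot)$ of the benefit gap evaluated at the gap — this pins down the marginal of $D_1$ uniquely and establishes the base case cleanly. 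For general finite $\cA$, I would build the joint distribution of $(D_a)_{a \in \cA}$ so that each pairwise marginal matches, leveraging the increasing-function hypothesis to guarantee that a consistent joint law exists (e.g. via a copula or an additive-noise construction in the spirit of random utility / discrete choice models, where Gumbel noise yields exactly such win-probability structure).

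The \textbf{uniqueness} part is where I expect to apply the hypothesis most sharply. When $\{\cD_A(\theta): \theta \in \Theta\} = \Delta^{\cA}$, the benefit gaps $\sB_a(\theta) - \sB_{a'}(\theta)$ range over enough values to sweep out the entire probability simplex. The plan is to argue that the full joint law of $\bC$ (normalized by $C_0 = 0$) is determined by the selection probabilities across all $\theta$. I would show that as $\theta$ varies over $\Theta$, the map $\theta \mapsto (\sB_a(\theta) - \sB_0(\theta))_{a}$ covers a region rich enough that the induced selection probabilities determine the joint CDF of $(D_a)_a = (C_a)_a$ at every point: the selection probability $\cD_{A=0}(\theta) = \Pr\{C_a \ge \sB_a(\theta) - \sB_0(\theta)\ \forall a \ne 0\}$ is exactly the survival function of the vector $\bC$ evaluated at the point $(\sB_a(\theta) - \sB_0(\theta))_a$, and surjectivity onto $\Delta^{\cA}$ forces these evaluation points to cover the full support, so the joint survival function — hence the law — is uniquely pinned down.

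The main obstacle I anticipate is the general-$\cA$ existence construction: matching \emph{all} pairwise win-probabilities $g_{a,a'}$ simultaneously with a single joint law is an over-determined-looking system, and one must verify that the increasing-function hypothesis (which constrains each $\cD_{A=a}(\theta)$ to depend only on a single benefit gap) is exactly the compatibility condition that makes the construction consistent. I would reconcile this by checking that the prescribed pairwise structure forces a consistent random-utility representation, and that the normalization $C_0 = 0$ removes the translation ambiguity inherent in any utility-maximization model (only cost \emph{differences} matter). Verifying that the constructed joint law indeed reproduces the \emph{full} selection probabilities $\cD_{A=a}(\theta)$ — not merely the pairwise ones — for every $\theta$ is the crux, and I expect the surjectivity assumption $\{\cD_A(\theta)\} = \Delta^{\cA}$ to be precisely what guarantees both that such a law exists and that it is unique.
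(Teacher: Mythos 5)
Your two-action base case and your uniqueness sketch do match the paper's proof: for $\cA=\{0,1\}$ the law of $C_1$ is pinned down as the increasing function of the gap $\sB_1(\theta)-\sB_0(\theta)$, and uniqueness follows because the collection $\{(\cD_A(\theta), \{\sB_a(\theta)-\sB_0(\theta)\}_{a\neq 0}):\theta\in\Theta\}$ specifies all probability--quantile pairs once $\{\cD_A(\theta):\theta\in\Theta\}=\Delta^{\cA}$. But the general-$\cA$ existence step --- which you yourself flag as the crux --- is a genuine gap, not just an unfinished detail. Matching \emph{pairwise} win-probabilities via a copula or Gumbel-type additive noise cannot work: Gumbel noise yields exactly the multinomial-logit family, a specific parametric class that cannot represent an arbitrary $\cD_A(\theta)$, and more fundamentally $\cD_{A=a}(\theta)$ is the probability of the \emph{joint} event $\{C_a-C_{a'}\le \sB_a(\theta)-\sB_{a'}(\theta)\ \forall a'\neq a\}$, so pairwise marginals of the cost differences neither determine it nor come with any evident consistency guarantee. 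You also invoke the surjectivity assumption $\{\cD_A(\theta):\theta\in\Theta\}=\Delta^{\cA}$ to rescue existence, but the lemma asserts existence \emph{without} it; the paper uses that assumption only for uniqueness, so even a completed version of your plan would prove a weaker statement.

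The paper closes this gap with an induction on $|\cA|$ that your proposal is missing. Writing $\cA=\{0,\dots,K\}$, it assumes a joint law of $\bC_{-K}=\{C_a; a<K\}$ realizing the \emph{conditional} choice probabilities $\cD_{A=a\mid A\neq K}(\theta)=\cD_{A=a}(\theta)/(1-\cD_{A=K}(\theta))$ on $\cA\setminus\{K\}$, and then constructs the conditional law of $C_K\mid \bC_{-K}$ so that, for \emph{every} realization of $\bC_{-K}$,
\[
\Pr\big\{C_K \le \min\nolimits_{a'<K}\big(C_{a'}+\sB_K(\theta)-\sB_{a'}(\theta)\big)\,\big|\, \bC_{-K}\big\}=\cD_{A=K}(\theta),
\]
which is possible because $\cD_{A=K}(\theta)$ is a fixed increasing function of each gap and hence of the shifted minimum. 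The decisive feature is that this conditional probability is \emph{constant in} $\bC_{-K}$: the event $\{A=K\}$ then factors out of the integral over $\bC_{-K}$, giving $\Pr\{A=a\}=\cD_{A\neq K}(\theta)\,\cD_{A=a\mid A\neq K}(\theta)=\cD_{A=a}(\theta)$ for every $a<K$ (continuity of the cost distributions disposes of ties). So the correct recursion is over nested \emph{conditional} distributions, not over pairwise comparisons; without this device (or an equivalent one) your construction never verifies the full selection probabilities, which is precisely the step you left open.
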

The complete proof is in \cref{appendix:proofs}, but here we explain the key idea in the simple case of $|\cA| = 2 $, \ie\ $\cA = \{0, 1\}$. We set $C_0 \equiv 0$. Since $\cD_{A=1}(\theta)$ increases with $\sB_1( \theta) - \sB_0( \theta) $, \ie\  $\cD_{A=1}(\theta) = G (\sB_1( \theta) - \sB_0( \theta))$ for an increasing function $G : \reals \to [0, 1]$, we extend $G$ to an appropriate c.d.f. $\tilde G$ over $\reals$, for which a random cost $C_1 \sim \tilde G $ leads to microfoundation \eqref{eq:microfoundation-random}, because
\begin{align}
&\Pr\big\{C_1 \le \sB_1( \theta) - \sB_0( \theta) \big\} = \tilde G (\sB_1( \theta) - \sB_0( \theta)) \nonumber\\
&= G (\sB_1( \theta) - \sB_0( \theta)) = \cD_{A=1}(\theta).
\end{align} If $\{\cD_1(\theta) : \theta\in \Theta\} = [0, 1]$ then the uniqueness of the distribution of $C_1$ is easily realized by noticing that the collection $\{ \sB_1( \theta) - \sB_0( \theta): \theta \in \Theta \}$ specifies all the quantiles of the distribution of $C_1$.

\paragraph{Misspecification of benefit function:} Although the learner must specify a $\bB(\theta)$ for the microfoundation, \cref{lemma:cost-existance} indicates that for any choice of $\bB(\theta)$ that meets the the lemma's condition, the microfoundation can accurately express $\cD_A(\theta)$. This allows a flexible choice of $\bB(\theta)$, as any misspecifications in the benefit are corrected within the estimated cost distribution, thereby ensuring the proper specification of $\cD_A(\theta)$.

\section{Learning the distribution of agents' action}
\label{sec:learning-cost}

In this section, we propose a simple algorithm for estimating the $\cD_A(\theta)$, which is the remaining unknown part of the distribution shift. 
Recall the expression of $\cD_a(\theta) \triangleq \cD_{A = a}(\theta)$ in \eqref{eq:D_f_a}, which we first re-express in a compact vector inequality form with a matrix $\bL_a = \{\bL_a(u, v); u \in \cA - \{a\} , v \in \cA\}\in \reals ^{(\cA - \{a\}) \times \cA}$ defined as
\[
\bL_a(u, v) = \begin{cases}
    1 & \text{if}~ v = a, u \in \cA - \{ a\} \\
    -1 & \text{if} ~ u = v \neq a\\
    0 & \text{otherwise}
\end{cases}
\] With the $\bL_a$ the $\cD_a(\theta)$ can be compactly expressed as
\begin{equation}
    \begin{aligned}
        \cD_a(\theta) & = \textstyle  \Pr\big\{C_a-C_{a'} \le \sB_a( \theta)-\sB_{a'}(\theta), \forall a'\neq a\big\} \\
        &= \Pr\{\bL_a\bC \preceq \bL_a \bB(\theta)\} = \sF_{a}\big(\bL_a \bB(\theta)\big)
    \end{aligned} \label{eq:pi_a-multiple skills}
\end{equation}
where $\preceq$ is the element-wise inequality operator, $\bC\triangleq\{C_a; a \in \mathcal{A}\}$, $\bB(\theta)\triangleq\{\sB_a( \theta); a \in \mathcal{A}\}$, $\sF_{a}$ is the multivariate cumulative distribution function (cdf) of $\bL_a \bC$. 
Learning ${\cD}_A(\cdot)$ reduces to learning $\{\sF_{a}(\cdot);a\in\cA\}$. For this purpose, we assume that the learner deploys predictive models $\theta_1,\dots,\theta_M$ and observes the estimated proportions of the actions of agents $\widehat{\pi}(\theta_1),\dots,\widehat{\pi}(\theta_M)$, \ie, for each $m$ the $\widehat{\pi}(\theta_m) = \{\widehat{\pi}_{a}(\theta_m),a \in \mathcal{A}\}$ consists of the estimated proportion of agents $\widehat{\pi}_{a}(\theta_m)$ who chose to take the action $a$ as a response to the model $\theta_m$. In the following remark, we explain how a learner may estimate these proportions.



\begin{remark}\label{remark:estimate-hatpi}
We explain two settings for estimating $\widehat\pi(\theta_m)$: with or without direct access to agents' actions. For the model $\theta_m$
\begin{enumerate}[label=(\arabic*),left=0em]
    \item If the learner has access to the actions $\{A_{m, i}; i \in [n_m]\}$ from $n_m$ $\iid$ draws from $\cD(\theta_m)$ then they can directly estimate the proportions as $\widehat \pi_{a}(\theta_m) = \nicefrac{1}{n_m} \sum\nolimits_{i = 1}^{n_m} \indicator\{ A_{m , i} = a\}$. 
    \item If the actions are not directly accessible to the learner, who can only access $\iid$ $\{Z_{m, i}; i \in [n_m]\}$ from $\cD(\theta_m)$, the moment matching approach from \citet{lipton2018Detecting} for estimating label shift can be easily adapted here. Since we assume that we know $\cD(Z\mid A; \theta_m)$, for a `suitable' choice of $h_m: \cZ \to \reals^{\cA}$, that has a good discriminating power over $\cD(Z \mid A; \theta_m)$, \ie\ smallest singular value of the following matrix is sufficiently larger than zero
    \[ \textstyle
    \Delta_m \triangleq \big[\Ex_{\cD(\theta_m)}[h_m(Z) \mid A = a]; a \in \cA\big] \in \reals^{\cA \times \cA}\,,
    \] the $\widehat\pi(\theta_m)$ can then be simply estimated as $\widehat \pi(\theta_m)= \Delta_m^{-1} \{\nicefrac{1}{n_m} \sum\nolimits_{i = 1}^{n_m} h_m(Z_{m, i})\}$. As explained in \citet[Theorem 3]{lipton2018Detecting}, the error in this estimator is inversely proportional to the smallest singular value of $\Delta_m$. 
\end{enumerate}

\end{remark}


Having estimated the $\widehat\pi(\theta_m)$'s, the $\{\sF_a; a \in \cA\}$ can be learned by solving a least squares (LS) problem with an appropriate monotone shape constraint. Recall that $\sF_a$ is the cumulative distribution function (cdf) of $\bL_a \bC$ and thus a coordinate-wise monotone function. We incorporate this shape constraint in our LS problem:
\begin{equation} \label{eq:coordinate-wise-monotone-regression}
    \begin{aligned}
       &\textstyle  \sum\limits_{a \in \cA,m \in [M]} n_m\big \{\sF_{a}\big(\bL_a \bB(\theta_m)\big) - \widehat{\pi}_{a}(\theta_m)\big \}^2,\\
   & \text{subject to} ~ \textstyle \sum\limits_{a\in\cA}\sF_a\big(\bL_a \bB(\theta_m)\big) = 1, ~\forall ~ m \in [M]\,,\\ 
    \end{aligned}
\end{equation}
where $\sF_a$ are multivariate cdfs. The loss corresponding to the estimate $\widehat\pi(\theta_m)$ is weighted by $n_m$ based on its estimation error: a $\widehat\pi(\theta_m)$ calculated from a larger sample of size $n_m$ is statistically more accurate and therefore is weighted with a weight of higher importance $n_m$. The LS problem in \eqref{eq:coordinate-wise-monotone-regression} is a coordinate-wise monotone regression problem \citep{chatterjee2018matrix,han2019Isotonic}; it is a convex quadratic program that can be efficiently solved. Armed with these $\widehat{\sF}_a$, the learner plugs them in to obtain (an estimate of) the distribution map $\widehat{{\cD}}_A(\theta)$, where for an $a \in \cA$ the $\widehat{\cD}_a(\theta) = \widehat{\sF}_a(\bL_a \bB(\theta))$ and any $\theta$. We summarize our approach in \cref{alg:discrete-actions}.

\begin{algorithm}
\caption{Learning discrete agent responses}\label{alg:discrete-actions}
\begin{algorithmic}[1]
\STATE Observe agent responses $\widehat \pi(\theta_1),\dots,\widehat \pi(\theta_M)\in\Delta^{|\cA|-1}$ to predictive models $\theta_1,\dots,\theta_M$
\STATE Estimate $\{ \sF_a$, $a\in\cA\}$ as $\{ \widehat{\sF}_a; a \in \cA\}$ with (coordinate-wise) monotone regression \eqref{eq:coordinate-wise-monotone-regression}
\STATE Estimate distribution map $\cD_{A}(\cdot)$ as $\widehat{\cD}_a(\cdot)\gets \widehat{\sF}_a(\bL_a \bB(\cdot))$, $a\in\cA$
\end{algorithmic}
\end{algorithm}


\subsection{Example. Univariate cost: CL model.}
\label{ex:coate-loury_2}

Consider the CL model in \cref{ex:coate-loury}, where let $\{\widehat\pi(\theta_m): m \in [M]\}\subset [0,1]$ be the estimated proportions of qualified workers (\ie\ $\pi(\theta_m)\triangleq \cD_{A = 1}(\theta_m)$) after they have been exposed to the thresholds $\{\theta_m: m \in [M]\} \in \Theta =[0,1]$; see \cref{remark:estimate-hatpi} for their estimation. 
At this point, we set the cost of $A = 0$ (the worker remains unskilled) to zero and only estimate the random cost of becoming skilled ($A = 1$). Denoting this cost by $C$, the optimization in \eqref{eq:coordinate-wise-monotone-regression} simplifies to just estimating $F_C$ (c.d.f. of $C$) as 
\begin{equation} \label{eq:cdf-CL-model}
   \textstyle\widehat{F}_C \leftarrow \arg \min _{F \in \cF} \sum\nolimits_{m=1}^M n_m\{F(b_m)-\widehat\pi(\theta_m)\}^2, 
\end{equation} where $b_m = I(\theta_m)$ and $\cF$ is a collection of non-negative and monotonically increasing functions.

Next, we provide a uniform convergence result for the estimated c.d.f. in \eqref{eq:cdf-CL-model} assuming that the learner observes the $A_{m, i}$'s. While it is possible to establish a similar result for the second estimator proposed in \cref{remark:estimate-hatpi} which does not require these $A_{m, i}$'s, we leave it for future research interest as we believe this will be technically more challenging and verify the quality of estimated c.d.f. obtained from such $\widehat\pi(\theta_m)$ in a synthetic experiment. To establish uniform convergence, we assume that the true $F_C$ is smooth and the design points $b_m$ in \eqref{eq:cdf-CL-model} are  ``regular'', which are formalized below.
\begin{assumption}
\begin{enumerate}[label=(A\arabic*),left=0em]
    \item \label{assumption1} For an $0 \le \alpha< 1$ the true $F_C$ is $\alpha$-H\"older smooth, \ie\ there exists a  $K>0$ such that $\left|F_C(b)-F_C(b')\right| \leq K|b-b'|^\alpha$ for any $b, b'\in [0,1]$.
    \item \label{assumption2}  The design points $b_m$ are either equidistant over the range of $\{I(\theta): \theta \in \Theta\}=[0,\omega]$ ($\omega$ is defined in \cref{ex:coate-loury}) or drawn as $\iid$ from a density that is bounded away from zero.
\end{enumerate}
\end{assumption}

Now we establish a uniform convergence result in the following proposition, which follows directly from \cite[Theorem 3.3]{mosching2020monotone}.

\begin{proposition}\label{thm:uniform_convergence}
Suppose that $\widehat \pi(\theta_m)= \nicefrac{1}{n_m}\sum\nolimits_{i=1}^{n_{m}} A_{m,i}$ and define $n \triangleq \sum\nolimits_m n_m$.  Under assumptions \ref{assumption1} and \ref{assumption2} the  $\widehat F_C$ defined in \eqref{eq:cdf-CL-model} has the following rate of convergence:
$$ \textstyle
\sup_{b \in [\delta_n,w-\delta_n]}|\widehat{F}_C(b)-F_C(b)| = \mathcal{O}_P  \big \{\big(\nicefrac{\log n}{n} \big)^{\nicefrac{\alpha}{(2\alpha + 1)}}  \big \} 
$$
where, the $\delta_n = \kappa \big(\nicefrac{\log n}{n} \big)^{\nicefrac{1}{(2\alpha + 1)}}$ with an universal constant $\kappa$ (see \citet[Remark 3.1 and 3.2]{mosching2020monotone}).
\end{proposition}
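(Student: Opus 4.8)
The plan is to recognize the objective in \eqref{eq:cdf-CL-model} as an ordinary isotonic least-squares criterion on a pooled sample of size $n=\sum_m n_m$, and then to invoke \cite[Theorem 3.3]{mosching2020monotone} after matching its hypotheses. The first (and really the only substantive) step is a bookkeeping identity. Since $\widehat\pi(\theta_m)=\nicefrac{1}{n_m}\sum_{i=1}^{n_m}A_{m,i}$ with $A_{m,i}\in\{0,1\}$, so that $A_{m,i}^2=A_{m,i}$, expanding the squares gives
\[
\sum_{m=1}^M\sum_{i=1}^{n_m}\{F(b_m)-A_{m,i}\}^2 = \sum_{m=1}^M n_m\{F(b_m)-\widehat\pi(\theta_m)\}^2 + c,
\]
where the constant $c$ does not depend on $F$. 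Thus $\widehat F_C$ is exactly the monotone least-squares fit of the $n$ responses $\{A_{m,i}\}$ against the design points $\{b_m\}$, each value $b_m$ being repeated $n_m$ times. In particular, the weighting by $n_m$ in \eqref{eq:cdf-CL-model} is not ad hoc but precisely the pooling of $n_m$ replicated observations, which places the estimation problem inside the regression framework of \citet{mosching2020monotone}.

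Next I would verify their assumptions for this pooled problem. The regression function is $b\mapsto F_C(b)=\Pr\{C\le b\}$ by \eqref{eq:Coate-Loury-dist-map-param}; it is nondecreasing and, by \ref{assumption1}, $\alpha$-H\"older with constant $K$, matching their smoothness requirement. The residuals $\varepsilon_{m,i}\triangleq A_{m,i}-F_C(b_m)$ are centered, independent, and bounded in $[-1,1]$, hence uniformly sub-Gaussian, with variance $F_C(b_m)\{1-F_C(b_m)\}\le \nicefrac{1}{4}$ bounded over all design points; this comfortably meets the noise/moment conditions used in \citet{mosching2020monotone}. Finally, \ref{assumption2} (equidistant $b_m$ on $[0,\omega]$, or i.i.d.\ from a density bounded away from zero) is exactly their design-regularity condition. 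With these in hand, Theorem 3.3 yields the uniform rate $\mathcal{O}_P\{(\nicefrac{\log n}{n})^{\nicefrac{\alpha}{(2\alpha+1)}}\}$ on the interior interval $[\delta_n,\omega-\delta_n]$, with the prescribed inflation $\delta_n=\kappa(\nicefrac{\log n}{n})^{\nicefrac{1}{(2\alpha+1)}}$; the interval must retreat from the endpoints to avoid the familiar boundary (spiking) inconsistency of isotonic estimators.

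The step I expect to require the most care is the \emph{effective design} induced by the replication counts $\{n_m\}$. After pooling, the empirical design measure assigns mass $n_m/n$ to $b_m$, so it is a reweighting of the raw design in \ref{assumption2} rather than the design itself. To apply \cite[Theorem 3.3]{mosching2020monotone} verbatim one needs this pooled measure to keep a density bounded away from zero on $[0,\omega]$, which holds provided the counts are comparable across $m$ (for instance $\max_m n_m/\min_m n_m$ bounded) or, more generally, whenever the normalized counts do not concentrate on a sparse subset of the $b_m$. Making this mild regularity on $\{n_m\}$ explicit---or folding it into \ref{assumption2}---is the one genuine gap to close; the remaining verification, matching the bounded Bernoulli noise and the H\"older smoothness to their conditions, is routine.
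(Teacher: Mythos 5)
Your proof takes essentially the same route as the paper, whose entire argument is the remark that the result ``follows directly from \citet[Theorem 3.3]{mosching2020monotone}'': your pooled-sample identity (rewriting the $n_m$-weighted least squares as ordinary isotonic regression on the $n$ replicated responses $A_{m,i}$) and the verification of the H\"older, bounded Bernoulli-noise, and design-regularity hypotheses are exactly the bookkeeping that citation leaves implicit. Your closing caveat---that the effective design measure assigns mass $n_m/n$ to $b_m$, so applying the cited theorem verbatim needs the replication counts to be comparable (or the pooled measure otherwise regular)---is a mild but genuine point that the paper's assumption \ref{assumption2} also glosses over, and is worth stating explicitly.
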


\begin{figure}[h]
    \centering
    \vspace{-0.1in}
    \begin{tabular}{c}
        \includegraphics[width=0.9\linewidth]{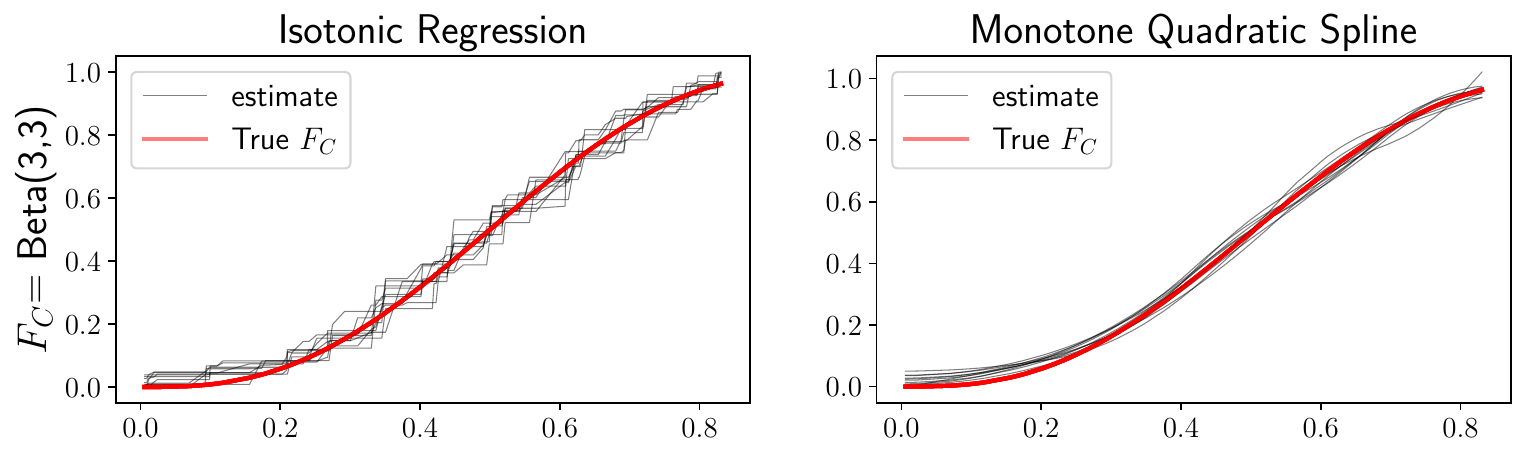}\\
        \includegraphics[width=0.9\linewidth]{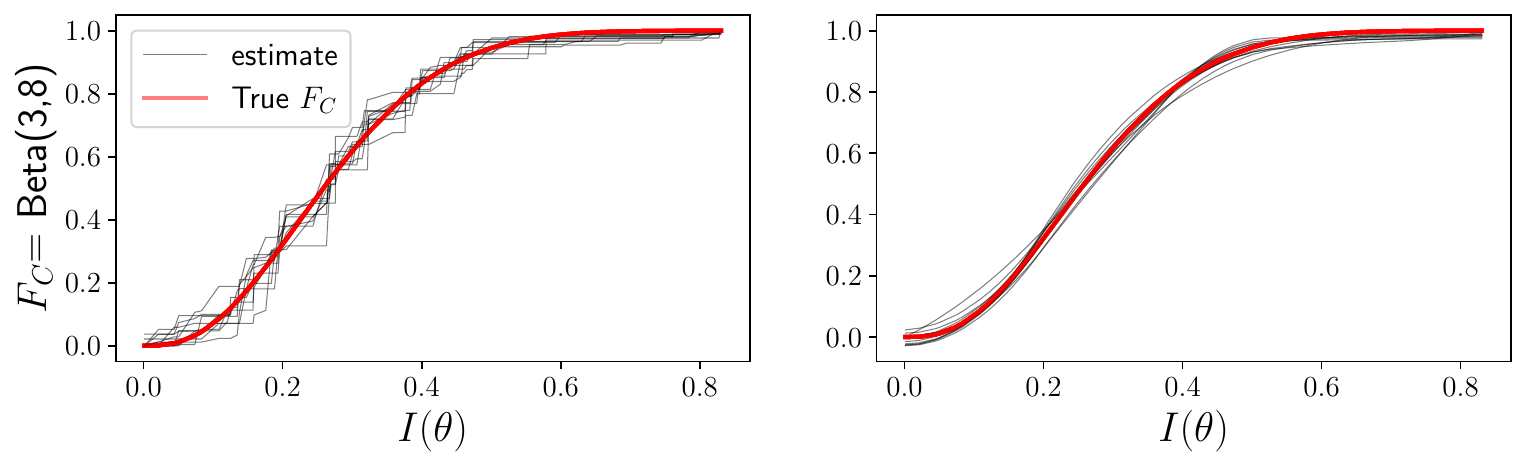}
    \end{tabular}
    \vspace{-0.15in}
    \caption{Estimation of $F_C$}
    \label{fig:CL-estimation-F}
\end{figure}

\paragraph{Estimating proportions with unknown actions:} Here we employ the second method proposed in \cref{remark:estimate-hatpi}, where the discriminating function $h_m$ is chosen simply as the classifier for action $A$ using $X$. This classifier can be fitted on a previously collected dataset, where both $A$ and $X$ are observed. Having obtained the $\widehat\pi(\theta_m)$ for thresholds $\theta_m$, we use \eqref{eq:cdf-CL-model} to estimate $\widehat F_C$. In \cref{fig:CL-estimation-F} we compare the $\widehat F_C$ with the true $F_C$, where we utilized two techniques for estimating a monotonically increasing functions: isotonic regression \citep{robertson1988order} and monotone quadratic spline \citep{meyer2008inference}. As the latter approach yields a smooth estimate, this is a better estimator, because the true $F_C$ is smooth. 


While we move on to design of model $\theta_m$ that is to be deployed, in the appendix, we demonstrate how $F_C$ can be estimated within an appropriate parametric model (see \cref{sec:parametric_model_CL}), and results for estimating distribution map with multiple actions $|\cA| \ge 3$ (see \cref{sec:multivariate-estimation}).
\section{Optimal design for deploying models under binary actions}\label{sec:optimal_design}

In this section, we assume that $\cA = \{0,1\}$, the deployed models $\theta_m$ belong to a known bonded convex set $\Theta$, and without loss of generality, $\sB_0(\theta)-\sB_1(\theta) \in [0,1]$ for every $\theta \in \Theta$, so that $b_m = b(\theta_m) \triangleq \bL_0 \bB(\theta_m) = \sB_0(\theta_m)-\sB_1(\theta_m) \in [0,1]$. For a function $\theta \mapsto f(\theta)$, we denote $\|f\|_{L^{\infty}(\Theta)}= \sup_{\theta \in \Theta}|f(\theta)|$. We propose an optimal design algorithm that aims to find the distribution $d$ of the $\iid$ values $b_m \in [0,1]$  to minimize the mean integrated square error $$
\textstyle \operatorname{MISE}\{\widehat{\sF} \mid d\} \triangleq \Ex_d\int\{\widehat \sF(b) - \sF(b)\}^2 \mathrm{d} b
$$ between the true cdf $\sF \triangleq \sF_1$, ($\sF_0 = 1-\sF$) and the estimated $\widehat \sF \triangleq \widehat \sF_1$ as defined in \cref{sec:learning-cost}. Whenever we say that we draw a $b_m \in [0,1]$, we also mean dawning the corresponding $\theta_m \in \Theta$ that leads to $b_m = b(\theta_m)$. 
\begin{example}
To build an intuition, assume that for each $m \in [M]$
\[\textstyle
\widehat \pi(\theta_m) = \sF(b_m) + \epsilon(b_m), ~~ \Ex[\epsilon] = 0, \var[\epsilon] = \sigma^2(b_m)\,.
\] 
Since $\mathcal{A}=\{0,1\}$, the estimator in \eqref{eq:coordinate-wise-monotone-regression} reduces to $\widehat \sF = \argmin_{F \in \cF} \sum\nolimits_m n_m (\widehat \pi (\theta_m) - F(b_m))^2$.    
\end{example}
The design density $d$ that minimizes the mean integrated squared error $\operatorname{MISE}\{\widehat{\sF} \mid d\}$ is given by 
\begin{equation}
    \label{def:optimal_design}
d^{\star}(b)=\nicefrac{\sigma(b)}{\int \sigma(u)\mathrm{d}u},
\end{equation}
where $\sigma^2(b)=\var[\widehat{\sF}(b)-\sF(b)]$ (see \citet{muller1984optimal, zhao2012sequential}) which, intuitively affirms that more design points $b_m$ are needed where the noise level for $\widehat{\sF}(b_m)-\sF(b_m)$ is high. In particular, if $\sigma(\cdot)$ is constant, then $d^{\star}(\cdot)$ is the uniform density. Although the design density takes the form of \eqref{def:optimal_design}, we're yet to learn $\sigma(\cdot)$, which we learn through a sequential method.

\subsection{Sequential method with doubling trick}\label{sec:Sequential_method-doubling-trick}

Different approaches have been developed to estimate the optimal design; most related is \textit{episode-sequential} design by \citet{zhao2012sequential}. Further developing on this design algorithm, we propose our new algorithm that uses a doubling trick to minimize the \textit{relative efficiency loss} \citep{zhao2012sequential} with respect to the optimal design $d^{\star}$ (as described in \eqref{def:optimal_design}):
\[\textstyle
\operatorname{REL}(d) \triangleq 1-\frac{\operatorname{MISE}\{\widehat{F} \mid d^{\star}\}}{\operatorname{MISE}\{\widehat{F} \mid d\}}.\] We divide the model choices $[M]$ into $K$ episodes that increase exponentially as $L_k = \tau_o 2^{k-1}$, where $\tau_o$ is the (pre-fixed) length of the first episode. With this choice of $L_k$, we have $K = \lfloor \log_2(1+\nicefrac{M}{\tau_o})\rfloor$. We state our algorithm in \ref{alg:seq-design-optial-ours} and establish an upper bound for relative efficiency in Theorem \ref{thm:sequential-design-ours}.


\begin{algorithm}
\caption{Sequential Design Algorithm with doubling trick}\label{alg:seq-design-optial-ours}
\begin{algorithmic}[1]
\STATE Fix $\tau_o\in \bbN$, $K = \log_2(1+\nicefrac{M}{\tau_o})$, $L_k=\tau_o 2^{k-1}$. Define the episodes $\mathcal{I}_1=\{1, \dots, L_1\}$ and $\mathcal{I}_k=\{1+\sum\nolimits_{j=1}^{k-1}L_j, \ldots, \sum\nolimits_{j=1}^{k}L_j\}$, for $k=2, \dots, K$.
\STATE Let the initial density $\widehat d^{(1)}$ be the uniform in $[0,1]$.
\FOR{$k$ in $[1, K]$}
\STATE For every $m \in \mathcal{I}_k$ sample $b_m \overset{\iid}{\sim}\widehat{d}^{(k)}$ and record $\widehat \pi (\theta_m)$ (see Remark \ref{remark:estimate-hatpi}).
\STATE Based on $\{\widehat \pi (\theta_m)\}_{m \in \mathcal{I}_k}$ update $\widehat{F}^{(k)}(b)$, $\widehat \sigma^{(k)}(b)$ and the design density $\widehat{d}^{(k+1)} (b) \propto \widehat \sigma^{(k)} (b)$.
\ENDFOR
\end{algorithmic}
\end{algorithm}


\begin{theorem}\label{thm:sequential-design-ours}
Assume that there exists a $K_0$ and a $\delta \in(0,1)$ such that $\|\widehat{d}^{(k)}-d^{\star}\|_{L^\infty(0,1)}=\mathcal{O}_P((\nicefrac{\log(L_k)}{L_k})^{\delta})
$ for all $k\geq K_0$ (where $\widehat{d}^{(k)}$ is defined in Algorithm \ref{alg:seq-design-optial-ours}). Then, as $M \rightarrow \infty$
\begin{align}\label{eq:rate-of-convergence-ragret}
\operatorname{REL}(\widehat{d}^{(K)}) = \mathcal{O}_P((\nicefrac{\log(M)}{M})^{\delta} ).
\end{align}
\end{theorem}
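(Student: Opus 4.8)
The plan is to reduce the relative efficiency loss to the sup-distance between the design density effectively achieved by \cref{alg:seq-design-optial-ours} and the optimal density $d^{\star}$, and then to exploit the doubling schedule of the episodes to control that distance. The first ingredient is to make the dependence of the MISE on the design density explicit. Following the asymptotic variance theory behind \eqref{def:optimal_design} \citep{muller1984optimal,zhao2012sequential}, the integrated squared error is governed to leading order by the functional $d \mapsto \int \sigma^2(b)/d(b)\,\mathrm{d}b$ (more points where $\sigma(\cdot)$ is large lower the variance). Writing $S \triangleq \int \sigma(u)\,\mathrm{d}u$, so that $d^{\star}=\sigma/S$ and $\int \sigma^2(b)/d^{\star}(b)\,\mathrm{d}b = S^2$, a Taylor expansion of the functional around $d=d^{\star}$ has vanishing first variation (because $d^{\star}$ minimizes it over densities), so the leading term is quadratic:
\begin{equation*}
\operatorname{REL}(d) = 1 - \frac{S^2}{\int \sigma^2(b)/d(b)\,\mathrm{d}b} = \left(\int \frac{\{d(b)-d^{\star}(b)\}^2}{d^{\star}(b)}\,\mathrm{d}b\right)(1+o(1)).
\end{equation*}
Since $\sigma(\cdot)$, hence $d^{\star}(\cdot)$, is bounded away from zero on the relevant interval, this gives the key sensitivity bound $\operatorname{REL}(d) \le C\,\|d-d^{\star}\|_{L^\infty(0,1)}^2$ for $d$ near $d^{\star}$.

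Next I would identify the effective design density of the pooled estimator at the final episode. As every episode contributes data, the effective density is the length-weighted mixture $\bar d \triangleq M^{-1}\sum_{k=1}^{K} L_k\,\widehat d^{(k)}$, and by the doubling schedule it is dominated by the final density $\widehat d^{(K)}$; I would therefore bound $\|\bar d - d^{\star}\|_{L^\infty(0,1)}$. Splitting the sum at $K_0$, the pre-$K_0$ episodes have total length $\sum_{k<K_0}L_k = O(1)$ and merely bounded densities, so they contribute $O(1/M)=o((\log M/M)^{\delta})$; for $k\ge K_0$ the assumed rate yields $L_k\,\|\widehat d^{(k)}-d^{\star}\|_{L^\infty(0,1)} = \mathcal{O}_P\big(L_k^{1-\delta}(\log L_k)^{\delta}\big)$.

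Here the doubling trick does the essential work. Because $L_k=\tau_o 2^{k-1}$ and $\delta<1$, the terms $L_k^{1-\delta}(\log L_k)^{\delta}$ grow geometrically, so $\sum_{k=K_0}^{K} L_k^{1-\delta}(\log L_k)^{\delta}\le C\,L_K^{1-\delta}(\log L_K)^{\delta}$, dominated by the last and largest episode. Since $\sum_{j=1}^{K}L_j \asymp M$ forces $L_K\asymp M$, I obtain
\begin{equation*}
\|\bar d - d^{\star}\|_{L^\infty(0,1)} \;\le\; \frac{C}{M}\,L_K^{1-\delta}(\log L_K)^{\delta} + o\!\big((\log M/M)^{\delta}\big) \;=\; \mathcal{O}_P\big((\log M/M)^{\delta}\big).
\end{equation*}
Combining with the sensitivity bound of the first step, $\operatorname{REL}(\widehat d^{(K)}) \le C\,\|\bar d - d^{\star}\|_{L^\infty(0,1)}^2 = \mathcal{O}_P\big((\log M/M)^{2\delta}\big)$, which is a fortiori $\mathcal{O}_P\big((\log M/M)^{\delta}\big)$ because $2\delta\ge\delta$ and $\log M/M\to 0$. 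This gives \eqref{eq:rate-of-convergence-ragret}.

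I expect the main obstacle to be the first step: rigorously pinning down the functional form of the MISE and its quadratic sensitivity to the design density. This rests on the asymptotic variance expansion of the shape-constrained least-squares estimator $\widehat\sF$ and requires justifying that the random, sequentially chosen design may be substituted into the deterministic MISE functional, as well as checking that $d^{\star}$ stays bounded below with the boundary effects on $[\delta_n,\,w-\delta_n]$ controlled. By contrast, the aggregation in the second and third steps is comparatively mechanical, amounting to the geometric-sum dominance granted by the doubling schedule.
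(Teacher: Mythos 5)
Your proposal is essentially correct and reaches the stated rate, but by a genuinely different route, and with one misdirected step worth flagging. The paper's proof is more direct: since $\operatorname{REL}(\widehat{d}^{(K)})$ is a functional of the \emph{final} design density alone, the paper simply evaluates the hypothesis at $k=K$, using $L_K\asymp M$ so that $\|\widehat{d}^{(K)}-d^{\star}\|_{L^\infty(0,1)}=\mathcal{O}_P((\nicefrac{\log M}{M})^{\delta})$, then invokes the expansion $\operatorname{REL}(\widehat{d}^{(K)})=\tfrac{4}{3}\rho_M+o(\rho_M^2)$ from \citet{zhao2012sequential}, with $\rho_M=(U_M-c^2)/U_M$, $U_M=\int \sigma^2(b)/\widehat{d}^{(K)}(b)\,\mathrm{d}b$ and $c=\int\sigma(b)\,\mathrm{d}b$, and bounds $\rho_M$ \emph{linearly} in the sup-norm error by a first-order Taylor expansion. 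Your first step is in fact sharper than the paper's: because $\widehat{d}^{(K)}$ and $d^{\star}$ are both densities, $\int(d-d^{\star})=0$ kills the first variation, and one even has the exact identity
\begin{equation*}
U_M-c^2=\int \frac{\sigma^2(b)\,\{d(b)-d^{\star}(b)\}^2}{d(b)\,d^{\star}(b)^2}\,\mathrm{d}b\;\ge\;0,
\end{equation*}
so $\operatorname{REL}(d)\lesssim\|d-d^{\star}\|_{L^\infty(0,1)}^2$ whenever $d$ and $d^{\star}$ are bounded below. This quadratic sensitivity, which the paper leaves on the table (its linear bound never uses $\int(d-d^{\star})=0$), would give the stronger rate $\mathcal{O}_P((\nicefrac{\log M}{M})^{2\delta})$; both are of course sufficient for \eqref{eq:rate-of-convergence-ragret}.

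The flag: your detour through the pooled density $\bar{d}=M^{-1}\sum_{k}L_k\,\widehat{d}^{(k)}$ is unnecessary and, as written, unjustified as a step. By the definition in \cref{sec:optimal_design}, $\operatorname{REL}(\widehat{d}^{(K)})$ compares $\operatorname{MISE}\{\widehat{\sF}\mid d^{\star}\}$ with $\operatorname{MISE}\{\widehat{\sF}\mid \widehat{d}^{(K)}\}$, i.e.\ the MISE of sampling from $\widehat{d}^{(K)}$ itself, not of the estimator built from the heterogeneous pooled data across episodes; hence the asserted inequality $\operatorname{REL}(\widehat{d}^{(K)})\le C\|\bar{d}-d^{\star}\|_{L^\infty(0,1)}^2$ does not follow from your sensitivity bound, which applies to the density actually appearing in the functional. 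The fix is immediate and shortens the proof: apply the sensitivity bound at $d=\widehat{d}^{(K)}$ directly, using the hypothesis at $k=K$ — this is precisely what the doubling schedule buys ($L_K\asymp M$), so no geometric-sum aggregation is needed. Note also that both your argument and the paper's tacitly require $\sigma$, hence $d^{\star}$ and $\widehat{d}^{(K)}$, to be bounded away from zero on $(0,1)$, a condition worth stating explicitly.
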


In Theorem \ref{thm:sequential-design-ours} $\operatorname{REL}(\widehat{d}^{(K)})$, has the same rate of convergence as the method in \citep[Theorem 2]{zhao2012sequential}. The advantage of Theorem \ref{thm:sequential-design-ours} is that, at every episode $k$ being $\widehat{d}^{(k)}$ a functional of $\iid$ data, then the unknown $\delta$ can be determined using classical uniform convergence results ($\eg$ \cite{mosching2020monotone, yang2019contraction}), which was not possible in \cite{zhao2012sequential}, where the data are not $\iid$ When $\sigma(\cdot)$ is a continuous function of $\sF$ (see $\eg$ Example \ref{ex:CL-sequential-design}, where $\sigma^2\propto \sF\bar \sF$) the value $\delta$ coincides with the value of $\eta$ that satisfies $\|\widehat{\sF}^{(k)}-\sF\|_{L^{\infty}(0,1)} = \mathcal{O}_P((\nicefrac{\log(L_k)}{L_k})^{\eta})$. In such cases, for non-parametric estimates such as kernel and spline (e.g. \citep[\S 3]{zhao2012sequential}), we have $\delta = \eta= \nicefrac{\beta}{2\beta + 1}$ where $\beta$ is the degree of smoothness of $\sF$. 

\begin{figure}[h]
\centering
\vspace{-0.1in}
    \centering
   \includegraphics[width=0.8\linewidth]{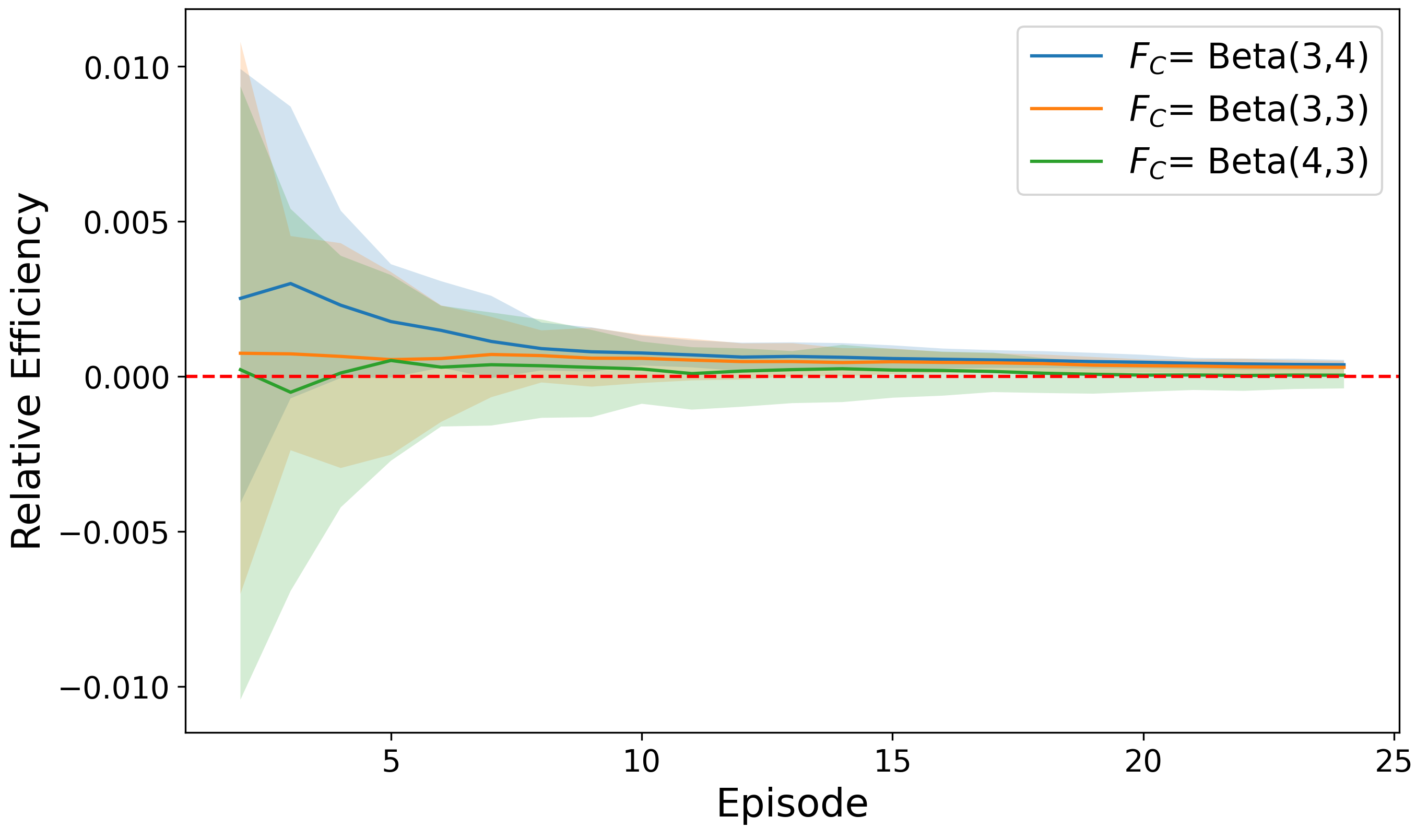}
   \vspace{-0.15in}
    \caption{Relative efficiencies and their $95\%$ confidence intervals for three different $F_C$'s.}
    \label{fig:CL-design-Beta}
\end{figure}

\begin{example}[CL model]\label{ex:CL-sequential-design}
Going back to \cref{ex:coate-loury}, continued in \cref{ex:coate-loury_2}, suppose that $w=1$, so that $b_m \in [0,1]$. Suppose that for every $b_m$ we observe $\widehat \pi  (\theta_m) = \nicefrac{1}{n_m} \sum\nolimits_{i=1}^{n_m} A_{m,i}$, where $n_m$ is the number of observations at point $b_m$. We have $\widehat \pi (\theta_m) = \textstyle F_C(b_m) + \epsilon(b_m)$ where $\epsilon(b_m) \triangleq \nicefrac{1}{n_m}\sum\nolimits_{i=1}^{n_m} \left(A_{m,i}-F_C(b_m)\right)$, $\mathbb{E}[\epsilon(b)]=0$ and $\sigma^2(b)=\mathbb{V}[\epsilon(b)]\propto F_C(b)\bar{F}_C(b)$. Thus, the optimal density design would be $d^{\star} (b) \propto (F_C(b)\bar{F}_C(b))^{1/2}$. Since $F_C$ is not known, we use \cref{alg:seq-design-optial-ours} to estimate it. In particular, at episode $k$, we estimate $F_C$ using isotonic regression using data  $\{b_m,\widehat{\sigma}^2(b_m) \propto \widehat{\pi} (\theta_m)(1-\widehat{\pi} (\theta_m))\}_{m\in \mathcal{I}_k}$. In Figure \ref{fig:CL-design-Beta} we see that the relative efficiency converges to zero as $K$ increases. 


\end{example}

\section{Regret analysis on performative risk}\label{sec:Regret}
In this section, we demonstrate how performative risk can be easily minimized by ``plugging-in'' the estimated distribution map. 
As in \cref{sec:optimal_design}, assume that $\cA = \{0,1\}$ and without loos of generality $b_m = b_m(\theta_m) \in [0,1]$. In order to find the \textit{performative optimum} the learner would minimize the performative risk \eqref{eq:PR-discrete-actions}
\begin{equation}\label{eq:true-PR}
\textstyle\theta^{\star} \in \underset{\theta\in \Theta}{\argmin}\big\{\PR(\theta) \triangleq \underset{a \in \cA}{\sum} \cD_a(\theta)\Ex\big[\ell(\theta;Z)\mid A = a\big]\big\}\,,
\end{equation} where $\cD_a(\theta)$ remains unknown. In a plug-in approach, the learner estimates $\theta^\star \in \Theta$ by plugging-in an estimate  $\widehat\cD_a(\theta)$ as in \cref{alg:discrete-actions}, and then minimizes the performative prediction risk. Toward this goal of minimizing $\PR(\theta)$ what $\{\theta_m; m \in [M]\}\subset \Theta$ should be deployed? In this section, we explore this question: an optimal design choice of $\{\theta_m; m \in [M]\}$ that minimizes the total (random) regret
\begin{equation}\label{def:regret}
    \textstyle\operatorname{Reg}(M) \triangleq \sum\nolimits_{m \in [M]} \PR(\theta_{m})-\PR(\theta^{\star}),
\end{equation}
where the randomness comes from $\theta_m$. To this end, we propose a design algorithm similar to \cref{alg:seq-design-optial-ours} that relies on a doubling trick and alternates between exploration and exploitation. We divide drawing $M$ models into $K = \log_2(1+M/\tau_0)$ episodes denoted as $\{\cJ_k; k \in [K]\}$, where the lengths of the episodes are $|\mathcal{J}_k|= \tau_0 2^{k-1}$ and $\tau_0$ is the number of models drawn in the first episode (initial phase) and does not affect the rate of convergence. We then divide each episode $\mathcal{J}_k$ into two sub-phases: the \textit{exploration} phase ($\mathcal{I}_k$), where we use Algorithm \ref{alg:seq-design-optial-ours} to estimate $\cD_a(\theta)$ and then \textit{exploitation} phase ($\mathcal{I}'_k$), where we constantly deploy
\begin{align}\label{eq:estimated-PR} \textstyle
\widehat \theta^{(k)} \in \underset{\theta \in \Theta}{\argmin}\big\{\widehat{\PR}^{(k)} (\theta) \triangleq \underset{a \in \mathcal{A}}{\sum} \widehat \cD_a^{(k)}(\theta) \Ex[\ell(\theta;Z)|A=a]\big\}.
\end{align}
We summarize our procedure in \cref{alg:seq-design-doubling}.
\begin{algorithm}
\caption{Sequential Design with doubling trick}\label{alg:seq-design-doubling}
\begin{algorithmic}[1]
\STATE Let $\mathcal{J}_1,\dots,\mathcal{J}_K$ sequential episodes with $\mathcal{J}_1 = \{1,\dots, \tau_o\}$, and $\mathcal{J}_k=\mathcal{I}_k \cup \mathcal{I}'_k$ for $k=2,\dots,K$, where $\mathcal{I}_k = \{\tau_o(2^{k-1} - 1)+1, \dots, a_k\}$ is the exploration index phase, where $a_k = (\tau_o2^{k-1})^{\alpha}$ for some $\alpha \in (0,1)$, and $\mathcal{I}'_k = \{a_k+1, \dots, \tau_o(2^{k} - 1)\}$ is the exploitation index phase.
\STATE For $k=1$ let $\widehat{d}^{(1)}$ be the uniform density in $[0,1]$.
\FOR{$k=1,\dots K$}
\STATE For every $m \in \mathcal{I}_k$ sample $b_m \overset{\iid}{\sim} \widehat{d}^{(k)}$ and record $\widehat \pi(\theta_m)$ (see Remark \ref{remark:estimate-hatpi}).
\STATE Based on $\{\widehat \pi(\theta_m)\}_{m \in \mathcal{I}_k}$ update $\{\widehat \cD_a^{(k)}(\cdot);a\in \cA\}$, $\widehat \sigma^{(k)}$ and the design density $\widehat{d}^{(k+1)} \propto \widehat \sigma^{(k)}$.
\STATE For every $m \in \mathcal{I}'_k$ deploy the same $\widehat{\theta}^{(k)} = \argmax_{\theta} \widehat{\PR}^{(k)}(\theta)$, where $\widehat{\PR}^{(k)}$ is defined in \eqref{eq:estimated-PR}.
\ENDFOR
\end{algorithmic}
\end{algorithm}
Next we evaluate the goodness of \cref{alg:seq-design-doubling} in terms of the total regret \eqref{def:regret}. Notice that
there is a trade-off between the exploration and exploitation phases: a larger exploration phase
produces a more accurate $\widehat{\PR}^{(k)}(\theta)$, but does not decrease overall regret. For this reason, in \cref{alg:seq-design-doubling} we balance between these two phases by letting the length of exploration phase $a_k = |\mathcal{I}_k| = |\mathcal{J}_k|^{\alpha} = (\tau_o2^{k-1})^{\alpha}$ where $\alpha \in (0, 1)$ is chosen optimally by looking at the rate of convergence in regret. This choice of $\alpha$, along with the rate of convergence for both regret and relative efficiencies, is provided in the following theorem.




\begin{theorem}\label{thm:regret_bound}
Assume that $\Ex[\ell(\theta,X)|A=a]$ is uniformly bounded in $\theta$ and $a$.  Consider the procedure of \cref{alg:seq-design-doubling}. Suppose that there exist $\eta,\delta \in(0,1)$ and $K_0>0$ such that $\|\cD(1)-\widehat \cD^{(k)}(1)\|_{L^{\infty}(\Theta)} = \mathcal{O}_P\left((\nicefrac{\log(|\mathcal{I}_k|)}{|\mathcal{I}_k|})^{\eta}\right)$ and $\| \widehat{d}^{(k)} - d^{\star} \|_{L^{\infty}(0,1)} = \mathcal{O}_P\left((\nicefrac{\log(|\mathcal{I}_k|)}{|\mathcal{I}_k|})^{\delta}\right)$ for $k \geq K_0$. Then the total regret is minimized at $\alpha = \nicefrac{1}{1+\eta}$ for with the regret and relative efficiencies are
\[
\begin{aligned}
    \operatorname{Reg}(M) = \mathcal{O}_P\big( M^{\nicefrac{1}{1+\eta}}\log^{\eta}(M)\big), \quad\\
    \operatorname{REL}(\widehat{d}^{(K)}) = \mathcal{O}_P\big(M^{- \nicefrac{\delta}{1+\eta}}\log^{\delta}(M)\big) &\text{ as }M \rightarrow \infty.
\end{aligned}
\]
\end{theorem}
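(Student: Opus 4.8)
The plan is to split the cumulative regret across the $K = \lfloor\log_2(1+M/\tau_o)\rfloor$ episodes and, within each episode, across its exploration and exploitation sub-phases, then choose $\alpha$ to balance the two contributions. Concretely, I would write
$\operatorname{Reg}(M) = \sum_{k}\big[\sum_{m\in\mathcal{I}_k}(\PR(\theta_m)-\PR(\theta^\star)) + \sum_{m\in\mathcal{I}'_k}(\PR(\widehat\theta^{(k)})-\PR(\theta^\star))\big]$.
For the exploration terms I would use the assumption that $\Ex[\ell(\theta;Z)\mid A=a]$ is uniformly bounded: since $\PR(\theta)=\sum_a\cD_a(\theta)\Ex[\ell(\theta;Z)\mid A=a]$ is a convex combination of these conditional losses, $\PR$ is a bounded functional, so each exploration step contributes at most a constant. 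Hence the exploration regret of episode $k$ is $O(|\mathcal{I}_k|)=O(|\mathcal{J}_k|^{\alpha})$, and summing the geometric lengths $|\mathcal{J}_k|=\tau_o2^{k-1}$ gives total exploration regret of order $\sum_k |\mathcal{J}_k|^\alpha = O(2^{K\alpha}) = O(M^\alpha)$, dominated by the last episode.

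The crux is the exploitation regret. Here I would invoke the standard plug-in inequality $\PR(\widehat\theta^{(k)})-\PR(\theta^\star)\le 2\sup_{\theta\in\Theta}|\PR(\theta)-\widehat{\PR}^{(k)}(\theta)|$, which holds because $\widehat\theta^{(k)}$ minimizes $\widehat{\PR}^{(k)}$ (the two middle cross-terms cancel and the remaining two are each bounded by the sup-distance). Since $\widehat{\PR}^{(k)}$ and $\PR$ use the \emph{same} known conditional losses and differ only through the distribution map, the binary structure $\cD_0=1-\cD_1$, $\widehat\cD_0^{(k)}=1-\widehat\cD_1^{(k)}$ collapses the difference to $\PR(\theta)-\widehat{\PR}^{(k)}(\theta)=(\cD_1(\theta)-\widehat\cD_1^{(k)}(\theta))(\Ex[\ell\mid A=1]-\Ex[\ell\mid A=0])$. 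With the boundedness assumption this gives $\sup_\theta|\PR-\widehat{\PR}^{(k)}|\le 2B\,\|\cD(1)-\widehat\cD^{(k)}(1)\|_{L^{\infty}(\Theta)}$, so the assumed rate $\|\cD(1)-\widehat\cD^{(k)}(1)\|_{L^{\infty}(\Theta)}=\mathcal{O}_P((\log|\mathcal{I}_k|/|\mathcal{I}_k|)^{\eta})$ together with $|\mathcal{I}_k|=|\mathcal{J}_k|^{\alpha}$ and $|\mathcal{I}'_k|\le|\mathcal{J}_k|$ yields episode-$k$ exploitation regret $\mathcal{O}_P(|\mathcal{J}_k|^{1-\alpha\eta}(\log|\mathcal{J}_k|)^{\eta})$. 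Because $1-\alpha\eta>0$ these grow geometrically, so the sum over $k$ is again dominated by the last episode, giving $\mathcal{O}_P(M^{1-\alpha\eta}\log^{\eta}M)$.

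Adding the two pieces gives $\operatorname{Reg}(M)=O(M^{\alpha})+\mathcal{O}_P(M^{1-\alpha\eta}\log^{\eta}M)$; equating exponents, $\alpha=1-\alpha\eta$, yields the optimal $\alpha=\nicefrac{1}{1+\eta}$, at which both terms are $\mathcal{O}_P(M^{\nicefrac{1}{1+\eta}}\log^{\eta}M)$, proving the regret bound. For the relative efficiency I would observe that the design density is updated exactly as in \cref{alg:seq-design-optial-ours}, but using only the exploration subsample $\mathcal{I}_k$ of size $|\mathcal{J}_k|^{\alpha}$ rather than the full episode. Thus \cref{thm:sequential-design-ours} applies verbatim with its effective sample size replaced by $|\mathcal{I}_K|$, which is of order $M^{\alpha}$ at the last episode, delivering $\operatorname{REL}(\widehat d^{(K)})=\mathcal{O}_P((\log(M^{\alpha})/M^{\alpha})^{\delta})=\mathcal{O}_P(M^{-\alpha\delta}\log^{\delta}M)$; substituting $\alpha=\nicefrac{1}{1+\eta}$ gives the stated $\mathcal{O}_P(M^{-\nicefrac{\delta}{1+\eta}}\log^{\delta}M)$.

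I expect the main obstacle to be the careful aggregation of the random per-episode $\mathcal{O}_P$ rates across the $K=O(\log M)$ episodes. I would handle it by factoring each episode's bound as (deterministic prefactor)$\times$(rescaled error), noting that the rescaled errors are uniformly tight by the stated assumption and that the geometric prefactors sum to the order of their last term; the last-episode random term then dominates, so the sum inherits the single-episode rate without accumulating a spurious $\log M$ penalty. The plug-in step's uniform-in-$\theta$ control of $\PR-\widehat{\PR}^{(k)}$ is the other place requiring care, though the binary cancellation makes it clean; the balancing of $\alpha$ and the transfer of \cref{thm:sequential-design-ours} to the reduced sample size are then routine.
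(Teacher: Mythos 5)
Your proposal is correct and follows essentially the same route as the paper's proof: the same episode/phase decomposition, a bounded-loss bound of order $|\mathcal{I}_k|$ for exploration, the plug-in inequality $\PR(\widehat\theta^{(k)})-\PR(\theta^{\star})\le 2\sup_{\theta\in\Theta}|\PR(\theta)-\widehat{\PR}^{(k)}(\theta)|$ (which the paper writes out as its $\mathcal{V}_1+\mathcal{V}_2$ split, using the optimality of both $\theta^{\star}$ and $\widehat\theta^{(k)}$), the binary collapse to $\|\cD_1-\widehat\cD_1^{(k)}\|_{L^{\infty}(\Theta)}$, the balancing $\alpha=1-\alpha\eta$ giving $\alpha=\nicefrac{1}{1+\eta}$, and the appeal to \cref{thm:sequential-design-ours} with effective sample size $|\mathcal{I}_K|\asymp M^{\alpha}$ for the relative efficiency. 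Your handling of the $\mathcal{O}_P$ aggregation across the $K=\mathcal{O}(\log M)$ episodes corresponds to the paper's per-episode union-bound argument, so there is no substantive gap.
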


\begin{remark}
As we mention in \cref{sec:Sequential_method-doubling-trick}, when $\sigma(\cdot)$ is a continuous function of $F$ (see $\eg$ \cref{ex:CL-sequential-design}, where $\sigma^2\propto F\bar F$), then $\delta \equiv \eta$.
\end{remark}

\begin{remark}\label{remark:regret_bound}
For one-dimensional isotonic regression we have $\eta= \nicefrac{1}{3}$ (see \cref{thm:uniform_convergence}) that leads to regret $\cO_P(M^{\nicefrac{3}{4}})$. For other standard non-parametric estimates, such as kernel and spline, see \eg\ \citep[Section  3]{zhao2012sequential}, $\eta=\nicefrac{\beta}{2\beta + 1}$ where $\beta$ is the smoothness parameter for $\sF$, leading to regret $\cO(M^{\nicefrac{2\beta+1}{3\beta + 1}})$.
\end{remark}


Comparing our \cref{thm:regret_bound} with \citet[Theorem 3]{jagadeesan2022Regret}, the key difference lies in our use of a plug-in estimator for the $\cD(\theta)$ shift versus their need for $\iid$ samples from $\cD(\theta)$ in each optimization step of $\PR$. Their regret bound is $\mathcal{O}(M^{\nicefrac{d+1}{d+2}})$, where $d$ is the zooming dimension of $\theta$, which in the worst case is identical to $\text{dim}(\theta)$. In contrast, our bound is independent of $\text{dim}(\theta)$,because the distribution map itself depends on $\theta$ only through the unidimensional $b(\theta) = \bL_0 \bB (\theta)$. Consequently, we achieve $\operatorname{Reg}(M)=\widetilde{\mathcal{O}}_P(M^{\nicefrac{2\beta+1}{3\beta+1}})$ (\cref{remark:regret_bound}), which is of lower order than $\mathcal{O}(M^{\nicefrac{d+1}{d+2}})$ whenever $\beta \geq 1$ and $d\geq 2$.

\section{Conclusions}
\label{sec:conclusion}

We propose a microfoundation framework with random cost to learn the distribution map in reverse causal performative prediction settings. We also provide a design choice of predictive models for the learner to efficiently estimate the distribution map and minimize the performative risk. The performative prediction settings are extremely relevant in ML research, as they are routinely realized in many strategic environments. We believe that our proposal of learning the distribution map will make the problem of performative risk minimization more accessible to practitioners.

\emph{Where does our reverse causal framework fit into learning the distribution map in broader performative prediction problems?} Our focus is primarily on finite action spaces, but the reverse causal model with an arbitrary $A$ offers significant potential in performative prediction. Specifically, any $\cD(X, Y;\theta)$ can be fit into reverse causal model by setting $A = (X,Y)$, making $\cD(X, Y \mid A;\theta)$ trivially independent of $\theta$. A balance can be struck between this general $A$ and finite action spaces by choosing an action variable $A$ that captures all relevant information about $\theta$, ensuring $\cD(X, Y\mid A; \theta)$ remains independent of $\theta$. Extending our approach to arbitrary action spaces may involve using generative models, such as a generative model $h$ with a latent variable $U$ (with a known distribution), to express $\cD_A(\theta)$ via $h(U, \theta)$. For instance, in the CL model \ref{ex:coate-loury}, setting $U \sim \text{unif}(0, 1)$ and $h(U, \theta) = \indicator\{U \le \pi(\theta)\}$ allows $A$ to be modeled as $h(U, \theta)$, where $\cD_{A = 1}(\theta) = P(h(U, \theta) = 1) = \pi(\theta)$, making $A$ and $h(U, \theta)$ identically distributed. Learning the generative map $h$ could benefit from the techniques in \citet{goodfellow2014generative,arjovsky2017wasserstein}. While exploring this further could broaden learning of distribution shifts, it is outside this paper's scope.

\section{Code Availability}
\label{sec:code}

The codes are available at \url{https://github.com/dbracale/LearningDistributionPP-Code}.

\bibliography{main}
\bibliographystyle{plainnat}

\newpage

\onecolumn

\makeatletter
\renewcommand{\aistatstitle}[1]{
  \hsize\textwidth
  \linewidth\hsize
  \toptitlebar 
  {\centering
  {\Large\bfseries #1 \par}}
  \bottomtitlebar 
  \vskip 0.1in  
}
\makeatother

\newpage
\section*{Checklist}

\begin{enumerate}
 \item For all models and algorithms presented, check if you include:
 \begin{enumerate}
   \item A clear description of the mathematical setting, assumptions, algorithm, and/or model. [Yes]
   \item An analysis of the properties and complexity (time, space, sample size) of any algorithm. [Yes]
   \item (Optional) Anonymized source code, with specification of all dependencies, including external libraries. [Yes]
 \end{enumerate}

 \item For any theoretical claim, check if you include:
 \begin{enumerate}
   \item Statements of the full set of assumptions of all theoretical results. [Yes]
   \item Complete proofs of all theoretical results. [Yes]
   \item Clear explanations of any assumptions. [Yes]     
 \end{enumerate}

 \item For all figures and tables that present empirical results, check if you include:
 \begin{enumerate}
   \item The code, data, and instructions needed to reproduce the main experimental results (either in the supplemental material or as a URL). [Yes]
   \item All the training details (e.g., data splits, hyperparameters, how they were chosen). [Yes]
    \item A clear definition of the specific measure or statistics and error bars (e.g., with respect to the random seed after running experiments multiple times). [Yes]
    \item A description of the computing infrastructure used. (e.g., type of GPUs, internal cluster, or cloud provider). [Not Applicable]
 \end{enumerate}

 \item If you are using existing assets (e.g., code, data, models) or curating/releasing new assets, check if you include:
 \begin{enumerate}
   \item Citations of the creator If your work uses existing assets. [Not Applicable]
   \item The license information of the assets, if applicable. [Not Applicable]
   \item New assets either in the supplemental material or as a URL, if applicable. [Not Applicable]
   \item Information about consent from data providers/curators. [Not Applicable]
   \item Discussion of sensible content if applicable, e.g., personally identifiable information or offensive content. [Not Applicable]
 \end{enumerate}

 \item If you used crowdsourcing or conducted research with human subjects, check if you include:
 \begin{enumerate}
   \item The full text of instructions given to participants and screenshots. [Not Applicable]
   \item Descriptions of potential participant risks, with links to Institutional Review Board (IRB) approvals if applicable. [Not Applicable]
   \item The estimated hourly wage paid to participants and the total amount spent on participant compensation. [Not Applicable]
 \end{enumerate}
 \end{enumerate}

\newpage
\appendix
%

%

\onecolumn
\aistatstitle{Learning the Distribution Map in Reverse Causal Performative
Prediction: \\
Supplementary Materials}

\section{Parametric model the distribution of the cost in the CL model}\label{sec:parametric_model_CL}

Suppose that $F_C$ is the cdf of a Gaussian $\mathcal{N}(\mu,\sigma)$ that is $g(b) = \Phi(\frac{b-\mu}{\sigma})$, where $\Phi$ is the cdf of a $\mathcal{N}(0,1)$. We have $F_C(b(\theta))= \Phi(\frac{b-\mu}{\sigma})$. If we observe models $\{\theta_m\}_{m \in [M]}$ and corresponding proportions $\{\widehat{\pi}_m\}_{m \in [M]}$ and benefits $\{b_m\}_{m \in [M]}$, to estimate $F_C$ we can solve the linear model 
$$
\phi_m \equiv \Phi^{-1}(\widehat{\pi}_m) = \frac{b_m}{\sigma}  - \frac{\mu}{\sigma} + \epsilon_m
$$
with the constraint that $\sigma>0$. The solution is
$$
\widehat{\sigma}_M = \begin{cases} \frac{\sum_{m=1}^M \Delta b_m \Delta \phi_m}{\sum_{m=1}^M \Delta b_m^2} & \text {when } \sum_{m=1}^M \Delta b_m \Delta \phi_m >0  \\ 0 & \text {otherwise }\end{cases}
$$
and $\widehat{\mu}_M = \bar{b}-\widehat{\sigma}_M\bar{\phi}$, where $\Delta b_m = b_m - \bar{b}$, $\Delta \phi_m = \phi_m - \bar{\phi}$, and $\bar{b}$ and $\bar{\phi}$ are the average of the $b_m$ and $\phi_m$, respectively. Similarly, we can choose a logistic model by replacing the standard Gaussian cdf with $\Phi(b) = 1/(1+e^{-b})$, the standard logistic cdf. The advantages of parametric models are simplicity of interpretation, computational efficiency, and convergence rate $\mathcal{O}_P(\sqrt{M})$.

\section{Simulation: multivariate cost estimation}\label{sec:multivariate-estimation}

\begin{figure}[h]
    \centering
    \begin{tabular}{cc}
        \hspace*{-0.2in}\includegraphics[width=0.35\textwidth]{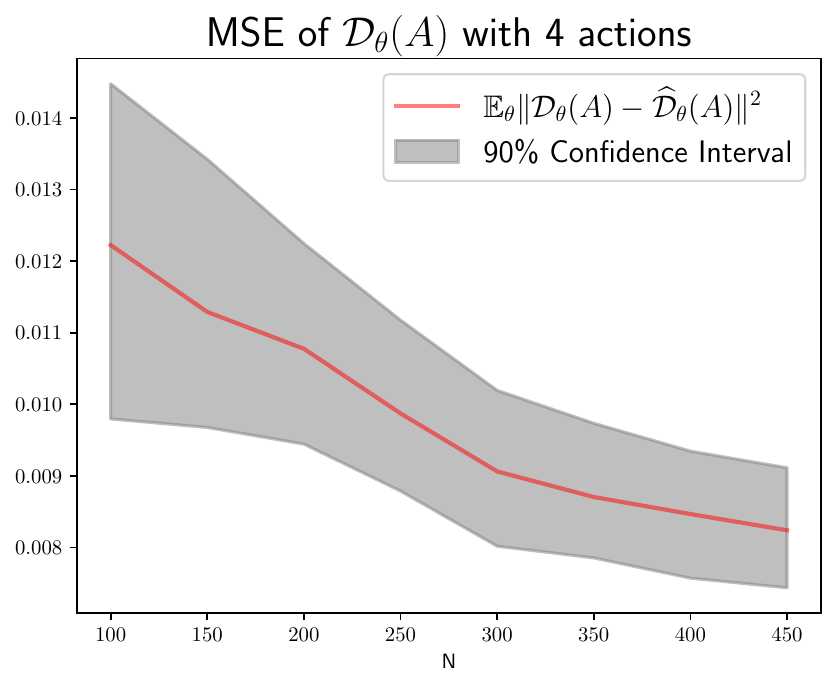} &
        \hspace*{-0.1in}\includegraphics[width=0.35\textwidth]{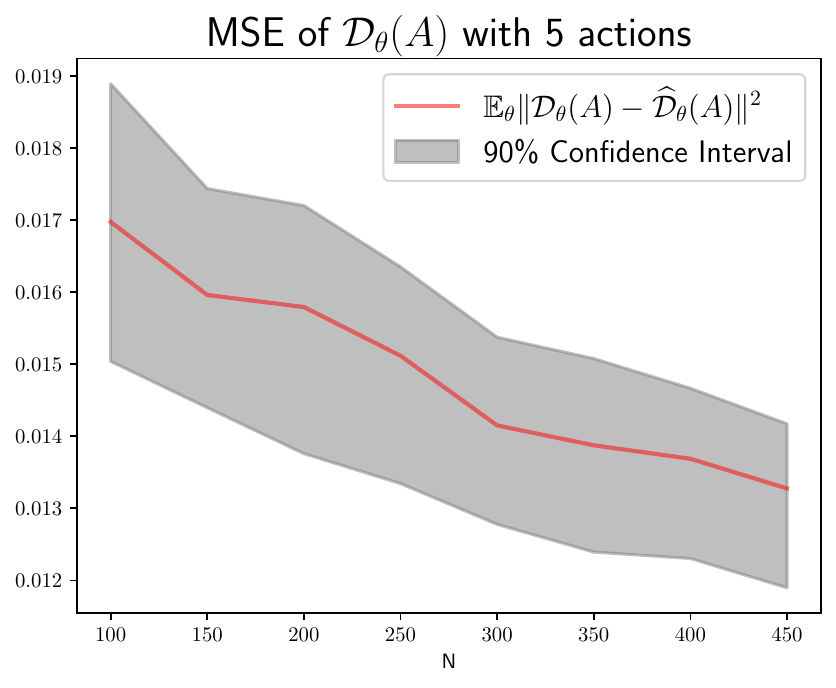}
    \end{tabular}
    \caption{Multivariate distribution map estimation.}
    \label{fig:CL-estimation-F-multivariate}
\end{figure}

In this section, we provide a simulation setup for a general number of actions $|\cA|$. We set $\bC \sim \cN_{|\cA|}(\boldsymbol{\mu},\Sigma)$, with $\boldsymbol{\mu}= \bold{0}$ and $\Sigma= \bI$. We select random benefits $\{\bB(\theta_m), m \in [M]\}$ with $M=500$. For every $a \in \cA$, we define $\widehat \pi_a(\theta_m) =  \pi_a(\theta_m)+ \epsilon_a(\theta_m)$, where $\pi_a(\theta_m) =\sF_{a}(L_a \bB(\theta_m))$, where $\sF_a$ is the cdf of $\bL_a \bC \sim \cN_{|\cA|-1}(\bL_a\boldsymbol{\mu},\bL_a \Sigma \bL_a^{\top})$ and $\iid$ $\epsilon_m(a) \sim \cN(0,\tau)$, with $\tau = 0.1$. We redefined $\widehat \pi_a(\theta_m) \leftarrow \max\{0,\min\{1,\widehat \pi_a(\theta_m)\}\}$ since they represents probabilities. For $a \in \cA \setminus \{0\}$, we get $\widehat F_a$ as multivariate isotonic regression with data $\{(\widehat \pi_a(\theta_m), \bL_a\bB(\theta_m)), m \in [M]\}$, and $\widehat \sF_{0} = 1-\sum\nolimits_{a \in \cA \setminus \{0\}} \widehat \sF_a$. Then we have $\widehat \cD_A({\theta_m}) = \{\widehat \sF_a(\bL_a\bB(\theta_m)); a \in \cA\}$ and $\cD_A({\theta_m}) = \{\sF_a(\bL_a\bB(\theta_m)); a \in \cA\}$ and we plot the quantity $\sum\nolimits_{m\in [N]}\|\cD_A({\theta_m})-\widehat \cD_A({\theta_m}) \|_2^2$, for $N \in \{50,100,150,200,\dots,500\}$. We repeat this process for $10$ times, with the same $\bC$ and $\{\bB(\theta_m), m \in [M]\}$ and with different seed to simulate $\{\pi_a(\theta_m), a \in \cA, m \in [M]\}$ and we show the result in Figure \ref{fig:CL-estimation-F-multivariate}.

\section{Regret Analysis: comparison with zero order methods}\label{sec:regret_comparison}

We compare our result stated in Theorem \ref{thm:regret_bound}, with the result provided by \cite{jagadeesan2022Regret}. They define
$$
\operatorname{Reg}_{\textit{mean}}(M)= \textstyle\sum_{m=1}^M \big[ \mathbb{E}[\PR(\theta_{m})]-\min_\theta\PR(\theta)\big]
$$
where the expectation is taken over the possible randomness in the choice of $\{\theta_m\}_{m\in[M]}$, a sequence of deployed models. The main difference between the two algorithms is that, while we estimate $\PR$ by plugging in an estimate of the distribution shift, they sample from the distribution shift to estimate $\PR$. In Theorem 4.2 they prove that 
\begin{equation*}
\operatorname{Reg}_{\textit{mean}}(M)=\mathcal{O}(M^{\frac{d+1}{d+2}}\epsilon^{\frac{d}{d+2}}+\sqrt{M})
\end{equation*}
where $d$ is the zooming dimensions (which is equal to the dimension of $\theta$ in the worst-case scenario) and $\epsilon$ is the sensitivity of the distribution map, which is a value proportional to the amount of distribution shift. They assume that:
\begin{enumerate}
    \item[$i)$] the loss $\ell$ is bounded in $[0,1]$ (it is the same as our Assumption in Theorem \ref{thm:regret_bound}).
    \item[$ii)$] $\ell$ is Lipschitz in the two variables (which we don't require);
\end{enumerate}
When $\epsilon = 0$, meaning that the performative effect vanishes, $\operatorname{Reg}_{\textit{mean}}(M)$ achieves the standard bound $\mathcal{O}(\sqrt{M})$ in an essentially dimension-independent manner. 
In our work, we assume that the distribution shift is non-trivial ($\ie$ $\epsilon>0$). Indeed, when $\epsilon \approx 0$, the agents don't move, which means that the cost $c$ is always null, and then, there is no need to estimate it. For this reason, our regret bound can be compared to \cite{jagadeesan2022Regret} when $\epsilon > 0$ and is independent of $M$, which leads to $\operatorname{Reg}_{\textit{mean}}(M)=\mathcal{O}(M^{\frac{d+1}{d+2}})$.. However, our result is independent of the dimension of $\theta$, because the distribution map only depends on $\theta$ throw the unidimensional variable $b(\theta) = \bL_0 \bB (\theta)$. As mentioned in Remark \ref{remark:regret_bound}, we can achieve $\operatorname{Reg}(M)=\widetilde{\mathcal{O}}_P(M^{\nicefrac{2\beta+1}{3\beta+1}})$, which upper-bound is lower than $\mathcal{O}(M^{\nicefrac{d+1}{d+2}})$ (excluding $\log(M)$ factors) whenever $\beta \ge \nicefrac1{(d-1)}$, \ie\ $\beta \geq 1$ and $d\geq 2$.

\section{Proofs} \label{appendix:proofs}
\begin{proof}[Proof of Lemma \ref{lemma:cost-existance}]
\label{proof:lemma-cost-existance}    
The necessary direction is trivially realized once $\cD_{A = a}(\theta)$ is expressed as 
\begin{equation} \label{eq:tech-cost-exists-3}
    \textstyle
\cD_{A = a}(\theta) = \Pr\big\{\sB_{a}( \theta)-\sB_{a'}( \theta) \ge C_a-C_{a'}, ~ \text{for all} ~ a'\neq a\big\}\,.
\end{equation} because this necessarily implies that $\cD_{A = a}(\theta)$ increases with $\sB_{a}( \theta)-\sB_{a'}( \theta)$ for any $a ' \neq a$. Establishing the sufficient direction is more non-trivial, which we establish using an induction argument. We start with the case of $|\cA| = 2$, where, without loss of generality, we let $\cA = \{0, 1\}$. Furthermore, we set $C_0 \equiv 0$. In this case, since $\cD_{A=1}(\theta)$ increases with $\sB(1, \theta) - \sB(0, \theta) $ there exists a unique random variable $C_1 $ such that 
\[\textstyle
\cD_{A=1}(\theta) = \Pr\big\{C_1 \le \sB(1, \theta) - \sB(0, \theta) \big\}\,.
\]   For such $\{C_0, C_1\}$ we have the desired microfoundation for $\cA = \{0,1 \}$.

For a more general $|\cA| = K + 1 > 2 $ let $\cA = \{0, 1, \dots, K\} $. To establish our induction step we assume that there exists a joint distribution for the $\bC_{-K} = \{C_a; a = 0, \dots, K - 1\}$ such that the microfoundation holds on $\cA - \{K\}$, \ie\ 
for any $a < K$ we have
\begin{equation} \label{eq:tech-cost-exists-2}
    \textstyle \cD_{A = a \mid A \neq K}(\theta) =  \frac{\cD_{A = a}(\theta)}{1 - \cD_{A = K}(\theta)} =  \Pr\big\{\sB_{a}( \theta)-\sB_{a'}( \theta) \ge C_a-C_{a'}, ~ \text{for all} ~ a' < K, a' \neq a\big\}\,.
\end{equation}

Now, we show that there exists a random variable $C_K$ such that 
\begin{equation}
    \label{eq:tech-cost-exists-1}
    \cD_{A = K}(\theta) = \Pr\big\{\sB_{K}( \theta)-\sB_{a'}( \theta) \ge C_K-C_{a'}, ~ \text{for all} ~ a' < K\big\}\,.
\end{equation} For the process, we further simplify the above expression in the right hand side as: 
\begin{equation} \label{eq:tech-cost-exists-4}
    \begin{aligned}
    & \textstyle \Pr\big\{\sB_{K}( \theta)-\sB_{a'}( \theta) \ge C_K-C_{a'}, ~ \text{for all} ~ a' < K\big\} \\
    & \textstyle = \Pr\big\{C_K \le C_{a'}+ \sB_{K}( \theta)-\sB_{a'}( \theta), ~ \text{for all} ~ a' < K\big\}\\
    & \textstyle = \Pr\big\{C_K \le \min_{a'<K} \big(C_{a'}+ \sB_{K}( \theta)-\sB_{a'}( \theta)\big)\big\}\\
    & \textstyle = \int \Pr\big\{C_K \le \min_{a'<K} \big(C_{a'}+ \sB_{K}( \theta)-\sB_{a'}( \theta)\big)\mid \bC_{-K}\big\} d \bbP(\bC_{-K})
\end{aligned}
\end{equation}

At this point, for any fixed value for $\bC_{-K}$ we can find a distribution for $C_K \mid \bC_{-K}$ such that
\[
\cD_{A = K}(\theta) = \Pr\big\{C_K \le \min_{a'<K} \big(C_{a'}+ \sB_{K}( \theta)-\sB_{a'}( \theta)\big)\mid \bC_{-K}\big\} 
\] This is possible because $\cD_{A = K}(\theta)$ increases with each $\sB_{K}( \theta)-\sB_{a'}( \theta)$ and thus with $\min_{a'<K} \big(C_{a'}+ \sB_{K}( \theta)-\sB_{a'}( \theta)\big)$. Denote the c.d.f. of the conditional distribution of $C_K \mid \bC_{-K}$ by $\sF_k (\cdot \mid \bC_{-K})$. We combine this conditional distribution with the joint distribution of $\bC_{-K}$ to obtain our desired joint distribution of $\bC$. 

To complete the proof, we need to show that such a choice of $\bC$ satisfies \eqref{eq:tech-cost-exists-3}. We have already established it for $a = K$, \ie\  \eqref{eq:tech-cost-exists-1} because according to \eqref{eq:tech-cost-exists-4}
\[
 \begin{aligned}
    & \textstyle \Pr\big\{\sB_{K}( \theta)-\sB_{a'}( \theta) \ge C_K-C_{a'}, ~ \text{for all} ~ a' < K\big\} \\
    & \textstyle = \int \Pr\big\{C_K \le \min_{a'<K} \big(C_{a'}+ \sB_{K}( \theta)-\sB_{a'}( \theta)\big)\mid \bC_{-K}\big\} d \bbP(\bC_{-K})\\
    & \textstyle = \int \cD_{A = K}(\theta) d \bbP(\bC_{-K}) = \cD_{A = K}(\theta).
\end{aligned}
\] To show this for $a \neq K$ we follow the identities:
\[
\begin{aligned}
  & \textstyle  \Pr\big[\sB_{a}( \theta)-\sB_{a'}( \theta) \ge C_a-C_{a'}, ~ \text{for all} ~ a'\neq a\big]\\
  & = \textstyle \Pr\big[\big \{\sB_{a}( \theta)-\sB_{a'}( \theta) \ge C_a-C_{a'}, ~ \text{for all} ~ a'\neq a, a' <K\big\} \\
  & ~~~~~~~~ \cap \big \{\sB_{a}( \theta)-\sB_{K}( \theta) \ge C_a-C_{K}\big\} \big]\\
  & =\textstyle \Pr\big[\big \{\sB_{a}( \theta)-\sB_{a'}( \theta) \ge C_a-C_{a'}, ~ \text{for all} ~ a'\neq a, a' <K\big\} \\
  & ~~~~~~~~ \cap \big \{\sB_{a'}( \theta)-\sB_{K}( \theta) > C_{a'}-C_{K} ~\text{for some} ~ a' <K\big\} \big] ~~ (\text{since dist. of $C_a$ are continuous})\\
  & = \textstyle \int \mathrm{d} \bbP(\bC_{-K}) \indicator \big \{\sB_{a}( \theta)-\sB_{a'}( \theta) \ge C_a-C_{a'}, ~ \text{for all} ~ a'\neq a, a' <K\big\}\\
  & \textstyle ~~~~~~~~~ \times \Pr\big \{\sB_{a'}( \theta)-\sB_{K}( \theta) > C_{a'}-C_{K},  ~\text{for some} ~ a' <K \mid \bC_{-K}\big\} \\
  & = \textstyle \int \mathrm{d} \bbP(\bC_{-K}) \indicator \big \{\sB_{a}( \theta)-\sB_{a'}( \theta) \ge C_a-C_{a'}, ~ \text{for all} ~ a'\neq a, a' <K\big\} \cD_{A \neq K}(\theta) ~~ (\text{by \eqref{eq:tech-cost-exists-1}}) \\
& = \textstyle \cD_{A \neq K}(\theta) \textstyle \int d \bbP(\bC_{-K}) \indicator \big \{\sB_{a}( \theta)-\sB_{a'}( \theta) \ge C_a-C_{a'}, ~ \text{for all} ~ a'\neq a, a' <K\big\}  \\ 
& = \textstyle \cD_{A \neq K}(\theta) \cD_{A = a\mid A \neq K}(\theta) ~~ (\text{by \eqref{eq:tech-cost-exists-2}}) \\
& \textstyle = \cD_{A =a}(\theta)\,.  \\ 
\end{aligned}
\] This completes the proof of existance.

\paragraph{Uniqueness:} If $\{\cD_{A}(\theta): \theta \in \Theta\}  = \Delta^{\cA}$ the uniqueness of the random variable is easily realized by noticing that the collection $\{(\cD_{A}(\theta), \{\sB(a, \theta) - \sB(0, \theta)\}_{a \neq 0}): \theta \in \Theta\}$ specifies all probability-quantile pairs.
\end{proof}

\textbf{Proof of Proposition \ref{thm:sequential-design-ours}}
\begin{proof}
Recall the following definitions: $L_k = \tau_o2^{k-1}$ is the length of the $k$-th episode for some integer $\tau_o$ that defined the length of the first episode; $K=K(M) = \log_2(1+M/\tau_o)$ is the total number of episodes. 

The assumption $\| \widehat{d}^{(k)} - d^{\star} \|_{L^{\infty}(0,1)} = \mathcal{O}_P\left(\log(L_k)^{\delta}L_k^{-\delta}\right)$ for $k \geq K_0$, implies in particular for $k =K$ that for every $\varepsilon>0$ there exists $C_1>0$ and $M_0>0$ such that
\begin{align}\label{proof:them:design:unif-conv}
\|\widehat{d}^{(K)}-d^{\star}\|_{L^{\infty}(0,1)} & \leq C_1 \log(L_K)^{\delta}L_K^{-\delta} = C_1\log^{\delta}(2^K)2^{-K\delta} = C_1K^{\delta}2^{-K\delta} = C_1\log^{\delta}_2(M)M^{-\delta},
\end{align}
with probability at least $1-\varepsilon$, for all $M\geq M_0$. From now on we consider all the statements in our proof happening in this high-probability event. We will also use the notation $a_n\asymp (\pm 1)\epsilon_n + a^{\star}$ to denote $|a_n-a^{\star}|\lesssim \epsilon_n$. For example, by assumption we have
$$
\widehat{d}^{(K)}\asymp (\pm 1)\xi + d^{\star}, \quad \text{uniformly in $(0,1)$, where} \quad \xi \triangleq \log^{\delta}_2(M)M^{-\delta}.
$$

\textbf{Step 1}.

From the proof of Theorem 2-3 of \cite{zhao2012sequential} we have that $\operatorname{REL}(\widehat{d}_{K}) = \frac{4}{3}\rho_M + o(\rho_M^2)$ as $M \rightarrow \infty$ 
$$
\rho_{M} \triangleq (U_{M}-c^2)/U_{M}
$$ 
with 
$$
U_{M} \triangleq \int \frac{\sigma^2(b)}{\widehat{d}^{(K)}(b)} \d b \quad \text{and} \quad c \triangleq \int \sigma(b) \d b,
$$
so that by \cref{def:optimal_design} $d^{\star} = \sigma/c$. Without loss of generality, we assume that $b \in (0,1)$. 

\textbf{Step 2}.

From $\widehat{d}^{(K)}\asymp (\pm 1)\xi + d^{\star}$ (uniformly in $b\in (0,1)$) we have
$$
U_M = \int \frac{\sigma^2(b)}{\widehat{d}^{(K)}}\d b \asymp  \int \frac{\sigma^2(b)}{(\pm 1)\xi + d^{\star}(b)}\d b
$$ 
Then
\begin{align*}
\frac{U_M-c^2}{c} \asymp  \int \frac{\sigma^2(b)}{c((\pm 1)\xi + d^{\star}(b))}\d b - \int \sigma (b) \d b = \int \frac{\sigma^2(b)-c\sigma(b)((\pm 1)\xi + d^{\star}(b))}{c((\pm 1)\xi + d^{\star}(b))} \d b
\end{align*}
Using that $d^{\star}(b) = \sigma(b)/c$ we have
\begin{align*}
\frac{U_M-c^2}{c} \asymp  -(\pm 1) \int \frac{\xi\sigma(b)}{\xi + d^{\star}(b)} \d b
\end{align*}
By the Taylor expansion $\tfrac{\xi \sigma}{\xi + d^{\star}} = \frac{\sigma}{d^{\star}}\xi + o(\xi^2)$ as $\xi \rightarrow 0$, since $\xi \rightarrow 0$ as $M \rightarrow \infty$, we have
$$
\frac{U_M-c^2}{c} \asymp  -(\pm 1) \xi \int \frac{\sigma (b)}{d^{\star}(b)} \d b + o(\xi^2) = - (\pm 1)\xi c + o(\xi^2), \quad M \rightarrow \infty
$$ 
Then using that $U_M = \int \frac{\sigma^2(b)}{\xi + d^{\star}(b)}\d b \rightarrow \int \frac{\sigma^2(b)}{d^{\star}(b)}\d b = c \int \sigma(b)\d b = c^2$ as $M\rightarrow \infty $ we have 
\begin{align*}
\rho_M &= \frac{U_M-c^2}{U_M} = \frac{U_M-c^2}{c} \frac{c}{U_M} \asymp  -(\pm 1) \xi + o (\xi^2)
\end{align*}
then as $M\rightarrow \infty$, $\rho_M \asymp  (\pm 1)\xi = (\pm 1) \log_2^{\delta}(M)M^{-\delta}$ that is $\rho_M = \mathcal{O}_P \left(\log_2^{\delta}(M)M^{-\delta} \right)$.
\end{proof}

\textbf{Proof of Theorem \ref{thm:regret_bound}}
\begin{proof}

From definition \eqref{def:regret}
\begin{align}\label{eq:proof_regret_regret_def_1}
\operatorname{Reg}(M) &= \sum_{k=1}^K\sum_{m \in \mathcal{J}_k} \PR(\theta_m^{(k)})- \PR(\theta^{\star}) \nonumber\\
&= \sum_{k=1}^K \left[ \sum_{m \in \mathcal{I}_k} \PR(\theta_m^{(k)})- \PR(\theta^{\star}) + \sum_{m \in \mathcal{I}_k'} \PR(\theta_m^{(k)})- \PR(\theta^{\star})\right]
\end{align}

For $m \in \mathcal{I}_k$, by the assumption of boundedness of the loss function, we have that $\PR(\theta_m^{(k)})- \PR(\theta^{\star})$ is bounded, then

\begin{align}\label{eq:proof_regret_regret_def_2}
\sum_{m \in \mathcal{I}_k} \PR(\theta_m^{(k)})- \PR(\theta^{\star}) \lesssim | \mathcal{I}_k|
\end{align}

For $m \in \mathcal{I}_k'$, where $\mathcal{I}_k'$ is the time-index of the exploitation phase, we split (for simplicity of notation we remove the dependence on $m$ in $\theta^{(k)}_m$)

\begin{align}\label{ineq:bound_V_1_and_V_2}
0 \leq \PR(\theta^{(k)}) - \PR(\theta^{\star}) &= \PR(\theta^{(k)}) - \widehat \PR^{(k)}(\theta^{(k)}) +\widehat \PR^{(k)}(\theta^{(k)}) - \PR(\theta^{\star}) \nonumber\\
&\leq \underset{=:\mathcal{V}_1}{\underbrace{|\PR(\theta^{(k)}) - \widehat \PR^{(k)}(\theta^{(k)})|}} +\underset{=:\mathcal{V}_2}{\underbrace{|\PR(\theta^{\star})-\widehat \PR^{(k)}(\theta^{(k)})|}}
\end{align}

\textbf{Bound $\mathcal{V}_1$.}

\begin{align}\label{ineq:bound_V_1}
\mathcal{V}_1 & = \left|\PR(\theta^{(k)})-\widehat\PR^{(k)}(\theta^{(k)}) \right| \nonumber\\
& = \sum_{a \in \mathcal{A}} \Big| \Ex_Z [\ell(\theta^{(k)};Z)|A=a] \cD_a(\theta^{(k)})-\Ex_Z [\ell(\theta^{(k)};Z)|A=a]\widehat \cD_{a}^{(k)}(\theta^{(k)}) \Big|\nonumber\\
&= \sum_{a \in \mathcal{A}}\Ex_Z [\ell(\theta^{(k)};Z)|A=a]\Big| \cD_{a}(\theta^{(k)})-\widehat \cD_{a}^{(k)}(\theta^{(k)})\Big|\nonumber\\
&\lesssim \sum_{a \in \mathcal{A}}\Big| \cD_{a}(\theta^{(k)})-\widehat \cD_{a}^{(k)}(\theta^{(k)})\Big|\nonumber\\
&=\Big| \cD_1(\theta^{(k)})-\widehat \cD_{1}^{(k)} (\theta^{(k)})\Big|+\Big| 1-\cD_1(\theta^{(k)})- (1-\widehat \cD_{1}^{(k)} (\theta^{(k)}))\Big|\nonumber\\
& \lesssim \Big| \cD_1(\theta^{(k)})-\widehat \cD_{1}^{(k)} (\theta^{(k)})\Big|\nonumber\\
& \lesssim \| \cD_1-\widehat \cD_{1}^{(k)}\|_{L^{\infty}(\Theta)}
\end{align}

\textbf{Bound $\mathcal{V}_2$.}

From definition \eqref{eq:true-PR} and \eqref{eq:estimated-PR} we have 
$$
\theta^{\star} \in \argmin_{\theta}\left\{\PR(\theta) = \sum_{a \in \mathcal{A}} \Ex_Z [\ell(\theta;Z)|A=a] \cD_a(\theta)\right\}
$$
and
$$
\theta^{(k)} \in \argmin_{\theta} \left\{\widehat \PR^{(k)}(\theta) = \sum_{a \in \mathcal{A}} \Ex_Z [\ell(\theta;Z)|A=a] \widehat \cD_a^{(k)}(\theta)\right\}.
$$

Then
$$
\PR(\theta^{\star})-\widehat\PR^{(k)}(\theta^{(k)}) = \PR(\theta^{\star})-\PR(\theta^{(k)})+\PR(\theta^{(k)})-\widehat\PR^{(k)}(\theta^{(k)}) \leq \PR(\theta^{(k)})-\widehat\PR^{(k)}(\theta^{(k)}),
$$
where we used that $\PR(\theta^{\star})\leq \PR(\theta^{(k)})$. Then
\begin{align}\label{ineq:PR-hatPR}
\PR(\theta^{\star})-\widehat\PR^{(k)}(\theta^{(k)})&\leq \PR(\theta^{(k)})-\widehat\PR^{(k)}(\theta^{(k)})\nonumber\\
& \leq \left|\PR(\theta^{(k)})-\widehat\PR^{(k)}(\theta^{(k)}) \right| \nonumber\\
& \leq \sum_{a \in \mathcal{A}} \Big| \Ex_Z [\ell(\theta^{(k)};Z)|A=a] \cD_a(\theta^{(k)})-\Ex_Z [\ell(\theta^{(k)};Z)|A=a]\widehat \cD_{a}^{(k)}(\theta^{(k)}) \Big|\nonumber\\
&= \sum_{a \in \mathcal{A}}\Ex_Z [\ell(\theta^{(k)};Z)|A=a]\Big| \cD_{a}(\theta^{(k)})-\widehat \cD_{a}^{(k)}(\theta^{(k)})\Big|\nonumber\\
&\lesssim \sum_{a \in \mathcal{A}}\Big| \cD_{a}(\theta^{(k)})-\widehat \cD_{a}^{(k)}(\theta^{(k)})\Big|\nonumber\\
&=\Big| \cD_1(\theta^{(k)})-\widehat \cD_{1}^{(k)} (\theta^{(k)})\Big|+\Big| 1-\cD_1(\theta^{(k)})- (1-\widehat \cD_{1}^{(k)} (\theta^{(k)}))\Big|\nonumber\\
& \lesssim \Big| \cD_1(\theta^{(k)})-\widehat \cD_{1}^{(k)} (\theta^{(k)})\Big|\nonumber\\
& \lesssim \| \cD_1-\widehat \cD_{1}^{(k)}\|_{L^{\infty}(\Theta)}
\end{align}
Similarly,
$$
-[\PR(\theta^{\star})-\widehat\PR^{(k)}(\theta^{(k)})] = -\PR(\theta^{\star})+\widehat\PR^{(k)}(\theta^{\star})-\widehat\PR^{(k)}(\theta^{\star})+\widehat\PR^{(k)}(\theta^{(k)}) \leq \widehat\PR^{(k)}(\theta^{\star})-\PR(\theta^{\star}),
$$ 
where we used that $\widehat\PR^{(k)}(\theta^{(k)})\leq \widehat\PR^{(k)}(\theta^{\star})$. Then
\begin{align}\label{ineq:-[PR-hatPR]}
-[\PR(\theta^{\star})-\widehat\PR^{(k)}(\theta^{(k)})]&\leq \PR(\theta^{\star})-\widehat\PR^{(k)}(\theta^{\star})\nonumber\\
& \leq\left|\PR(\theta^{\star})-\widehat\PR^{(k)}(\theta^{\star}) \right| \nonumber\\
& \leq \sum_{a \in \mathcal{A}} \Big| \Ex_Z [\ell(\theta^{\star};Z)|A=a] \cD_a(\theta^{\star})-\Ex_Z [\ell(\theta^{\star};Z)|A=a]\widehat \cD^{(k)}_a(\theta^{\star}) \Big|\nonumber\\
&= \sum_{a \in \mathcal{A}}\Ex_Z [\ell(\theta^{\star};Z)|A=a]\Big| \cD_a(\theta^{\star})-\widehat \cD^{(k)}_a(\theta^{\star})\Big|\nonumber\\
&\lesssim \sum_{a \in \mathcal{A}}\Big| \cD_a(\theta^{\star})-\widehat \cD^{(k)}_a(\theta^{\star})\Big|\nonumber\\
&=\Big| \cD_1({\theta^{\star}})-\widehat \cD^{(k)}_1(\theta^{\star})\Big|+\Big| 1-\cD_1(\theta^{\star})- (1-\widehat \cD_1^{(k)}(\theta^{\star}))\Big|\nonumber\\
& \lesssim \Big| \cD_1(\theta^{\star})-\widehat \cD_1^{(k)}(\theta^{\star}) \Big| \nonumber\\
& \lesssim \| \cD_1-\widehat \cD_1^{(k)} \|_{L^{\infty}(\Theta)}
\end{align}

Thus, by \eqref{ineq:PR-hatPR} and \eqref{ineq:-[PR-hatPR]}, we have $\mathcal{V}_2=|\PR(\theta^{\star})-\widehat \PR^{(k)}(\theta^{(k)})| \lesssim \| \cD_1-\widehat \cD_1^{(k)} \|_{L^{\infty}(\Theta)}=\mathcal{O}_P\left([\log(|\mathcal{I}_k|)/|\mathcal{I}_k|]^{\eta}\right )$.

\textbf{Putting together bound $\mathcal{V}_1$ and $\mathcal{V}_2$.}

Form \eqref{ineq:bound_V_1_and_V_2} and the two bounds we have that, for $m \in \mathcal{I}_k'$ (reintroducing the dependence on $m$ in $\theta^{(k)}_m$)
$$
\PR(\theta_m^{(k)})-\PR(\theta^{\star})=\mathcal{O}_P\left([\log(|\mathcal{I}_k|)/|\mathcal{I}_k|]^{\eta}\right ).
$$
This means that for every $\varepsilon>0$ there exists a $C_{1,m}^{(k)}>0$ and a $K_{0,m}^{(k)}>0$ such that 
$$
\mathbb{P}\left(\PR(\theta_m^{(k)})-\PR(\theta^{\star})\leq C_{1,m}^{(k)} [\log(|\mathcal{I}_k|)/|\mathcal{I}_k|]^{\eta}\right)\geq 1-\frac{\varepsilon}{|\mathcal{J}_k|K}, \quad \text{ for all }k \geq K_{0,m}^{(k)}.
$$
Now, in the intersection of these $|\mathcal{J}_k|$ events (which intersection has probability at least $1-\frac{\varepsilon}{K}$), using \eqref{eq:proof_regret_regret_def_2} we have
\begin{align*}
\sum_{m \in \mathcal{J}_k} \PR(\theta_m^{(k)})-\PR(\theta^{\star}) & = \sum_{m \in \mathcal{I}_k} \PR(\theta_m^{(k)})- \PR(\theta^{\star}) + \sum_{m \in \mathcal{I}_k'} \PR(\theta_m^{(k)})- \PR(\theta^{\star})\\
&\lesssim |\mathcal{I}_k|+|\mathcal{I}'_k||\mathcal{I}_k|^{-\eta}\log^{\eta}(|\mathcal{I}_k|) \\
&\lesssim 2^{\alpha k} + 2^k 2^{-\alpha k \eta} \log^{\eta}(2^{\alpha k}) \\
&\lesssim k^{\eta}(2^{\alpha k} + 2^k 2^{-\alpha k \eta}).
\end{align*}
The optimal $\alpha$ is achieved when the two exponents are equal that is $\alpha = 1-\alpha \eta $, that implies $\alpha = 1/(1+\eta)$. With this choice we get that for every $\varepsilon>0$ there exists a $C_1^{(k)}>0$ and a $K_0^{(k)}>0$ such that
\begin{align*}
\mathbb{P}\left(\sum_{m \in \mathcal{J}_k} \PR(\theta_m^{(k)})-\PR(\theta^{\star}) \leq C_1^{(k)} k^{\eta} 2^{\alpha k}\right) \geq 1-\frac{\varepsilon}{K}, \quad \text{ for all } k \geq K_0^{(k)}
\end{align*}
Now, consider the intersection of these events over the $K$ episodes (which intersection has a probability of at least $1-\varepsilon$). In this intersection, we have that, from \eqref{eq:proof_regret_regret_def_1}
$$
\operatorname{Reg}(M) \lesssim \sum_{k=1}^K k^{\eta}2^{\alpha k} \lesssim K^{\eta} \sum_{k=1}^K 2^{\alpha k} \lesssim K^{\eta}2^{\alpha K} \lesssim \log^{\eta}(M)M^{\alpha} = \log^{\eta}(M)M^{1/(1+\eta)},
$$
where we used that $K = \log_2(1+M/\tau_o)$. This proves that
$$
\operatorname{Reg}(M) = \mathcal{O}_P(\log^{\eta}(M)M^{1/(1+\eta)}).
$$
To prove the convergence of the relative efficiency, we apply Theorem \ref{thm:sequential-design-ours}.
\end{proof}

\newpage
\thispagestyle{empty}  

\end{document}